\definecolor{darkgreen}{rgb}{0.0,0,0.9}
\newcommand{\Input}{\item[{\bf Input:}]}
\newcommand{\setword}[2]{%
  \phantomsection
  #1\def\@currentlabel{\unexpanded{#1}}\label{#2}%
}
\def\R{\mathbb{R}}
\def\mP{\mathbb{P}}
\def\mE{\mathbb{E}}
\def\cM{{\cal M}}
\def\cE{{\cal E}}
\def\C{\mathbb{C}}
\def\eps{\epsilon}
\newcommand{\PP}[2]{\mP_{#1}\left[#2\right]}
\renewcommand{\P}[1]{\mP\left[#1\right]}
\newcommand{\EE}[2]{\mE_{#1}\left[#2\right]}
\newcommand{\norm}[1]{\|#1\|}
\newcommand{\supp}[1]{\textup{supp}\{#1\}}
\DeclareMathOperator{\TV}{TV}
\DeclareMathOperator{\poly}{poly}
\DeclareMathOperator{\image}{Im}
\DeclareMathOperator{\Var}{Var}
\numberwithin{equation}{section}
\newtheorem{theorem}{Theorem}[section]
\newtheorem{lemma}{Lemma}[section]
\newtheorem{fact}[theorem]{Fact}
\newtheorem{corollary}[theorem]{Corollary}
\newtheorem{definition}{Definition}[section]
\newenvironment{proofof}[1]{{\em Proof of #1.}}{\hfill
\qed}
\title{Monte Carlo Markov Chain Algorithms for Sampling \\Strongly Rayleigh Distributions and Determinantal Point Processes}
\author{
Nima Anari
\thanks{Computer Science Division, UC Berkeley.
Email: \protect\url{anari@berkeley.edu}}
\and
Shayan Oveis Gharan
\thanks{Department of Computer Science and Engineering, University of Washington.
Email: \protect\url{shayan@cs.washington.edu}}
\and
Alireza Rezaei
\thanks{Department of Computer Science and Engineering, University of Washington.
Email: \protect\url{arezaei@washington.edu}}
}
\begin{document}

\maketitle

\begin{abstract}
Strongly Rayleigh distributions are natural generalizations of product distributions and determinantal probability measures that satisfy the strongest form of negative dependence properties \cite{BBL09}.
We show that the ``natural'' Monte Carlo Markov Chain (MCMC) algorithm  mixes rapidly in the support of a {\em homogeneous} strongly Rayleigh distribution.
As a byproduct, our proof implies that Markov chains can be used to efficiently generate approximate samples of a $k$-determinantal point process. This answers an open question raised in \cite{DR10} which was studied recently in \cite{Kan13,LJS15,RK15}.
\end{abstract}


\section{Introduction}
Let $\mu:2^{[n]}\to \mathbb{R}_+$ be a probability distribution on the subsets of the set $[n]=\{1,2,\dots,n\}$. In particular, we assume that $\mu(.)$ is nonnegative and,
$$ \sum_{S\subseteq [n]} \mu(S) = 1.$$
 We assign a multi-affine polynomial with variables $z_1,\dots,z_n$ to $\mu$,
 $$ g_\mu(z) = \sum_{S\subseteq [n]} \mu(S)\cdot z^S,$$
 where for a set $S\subseteq[n]$, $z^S=\prod_{i\in S} z_i$.
The polynomial $g_\mu$ is also  known as the {\em generating polynomial} of $\mu$. We say $\mu$ is  {\em $k$-homogeneous} if $g_\mu$ is a homogeneous polynomial of degree $k$, i.e., if for any $S\in \supp{\mu}$, we have $|S|=k$.

A polynomial $p(z_1,\ldots,z_n)\in\C[z_1,\dots,z_n]$ is {\em stable}
if whenever $\image(z_i)>0$ for all $1\leq i\leq m$, $p(z_1,\ldots,z_m)\neq 0$. We say $p(.)$ is real stable, if it is stable and all of its coefficients are real. Real stable polynomials are considered to be a natural generalization of real rooted polynomials to multivariate polynomials. In particular, as a sanity check, it follows that any univariate polynomial is real stable  if and only if it is real rooted.
 We say that $\mu$ is a {\em strongly Rayleigh} distribution if $g_\mu$ is a real stable polynomial.

Strongly Rayleigh distributions are introduced and deeply studied in the  work of  \cite{BBL09}. 
These distributions are natural generalizations of  determinantal measures and random spanning tree distributions.  It is shown in \cite{BBL09} that strongly Rayleigh distributions satisfy the strongest form of negative dependence properties. 
These negative dependence properties were recently exploited to design approximation algorithms \cite{OSS11,PP14,AO14}.

In this paper we show that the ``natural'' Monte Carlo Markov Chain (MCMC) method  on the support of a homogeneous strongly Rayleigh distribution $\mu$ {\em mixes} rapidly. Therefore, this Markov Chain can be used to efficiently draw an approximate sample from $\mu$. Since determinantal point processes are special cases of strongly Rayleigh measures, our result implies that the same Markov chain efficiently generates random samples of a $k$-determinantal point process (see \autoref{sec:dpp} for the details).

We now describe the {\em lazy} MCMC $\cM_\mu$. The state space of $\cM$ is $\supp{\mu}$ and the transition probability kernel $P_\mu$ is defined as follows.
 We may drop the subscript if $\mu$ is clear in the context.
For a set $S\subseteq [n]$ and $i\in [n]$, let 
\begin{eqnarray*}
S-i &=& S\setminus \{i\},\\
S+i &=& S\cup \{i\}.	
\end{eqnarray*}
In a state $S$, choose an element $i\in S$ and $j\notin S$ uniformly and independently at random, and let $T=S-i+j$, then
\begin{enumerate}[i)]
\item If $T	\in \supp{\mu}$, move to $T$ with probability $\frac12 \min\{1, \mu(T)/\mu(S)\}$;
\item Otherwise, stay in $S$.
\end{enumerate}
It is easy to see that $\cM_\mu$ is reversible and $\mu(.)$ is the stationary distribution of the chain. 
In addition, Br\"and\'en showed that the support of a (homogeneous) strongly Rayleigh distribution is the set of bases of a matroid \cite[Cor 3.4]{Bra07}; so $\cM_\mu$ is irreducible. Lastly, since we stay in each state $S$ with probability at least $1/2$, $\cM_\mu$ is a lazy chain. 

In our main theorem we show that the above Markov chain is {\em rapidly mixing}. In particular, 
if we start $\cM_\mu$ from a state $S$, then after $O(n\cdot k\log(\frac{1}{\eps\cdot \mu(S)}))$ steps we obtain an $\eps$-approximate sample of the strongly Rayleigh distribution. 
First, we need to setup the notation. 
For probability distributions $\pi,\nu:\Omega\to\R_+$,
the total variation distance of $\pi,\nu$ is defined as follows:
$$ \norm{\nu-\pi}_{\TV} 
= \frac12\sum_{x\in\Omega} |\nu(x)-\pi(x)|.$$
If $X$ is a random variable sampled according to $\nu$ and $\norm{\nu - \pi}_{\TV}\leq \eps$, then we say $X$ is an $\eps$-approximate sample of $\pi$.
\begin{definition}[Mixing Time]
For a state $x\in\Omega$ and $\eps>0$, the total variation mixing time of a chain started at $x$ with transition probability matrix $P$  and stationary distribution $\pi$ is defined as follows:
$$ \tau_x(\eps):=\min\{t: \norm{P^t(x,.)-\pi}_{\TV}\leq\eps\},$$
where $P^t(x,.)$ is the distribution of the chain started at $x$ at time $t$.
\end{definition}

\noindent The following is our main theorem. 
\begin{theorem}\label{thm:SRmixing}
	For any strongly Rayleigh $k$-homogeneous probability distribution $\mu:2^{[n]}\to\R_+$, $S\in\supp{\mu}$ and $\eps>0$,
$$ \tau_S(\eps) \leq \frac1{C_\mu} \cdot \log\left(\frac{1}{\eps\cdot\mu(S)}\right),$$
where 
\begin{equation}\label{eq:Cmu}
	C_\mu:=\min_{S,T\in\supp{\mu}} \max(P_\mu(S,T),P_\mu(T,S))
\end{equation}
is at least $\frac1{2kn}$ by construction.
\end{theorem}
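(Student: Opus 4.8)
The plan is to reduce the mixing statement to a lower bound on the spectral gap of $\cM_\mu$ and then prove that bound by induction on the ground-set size $n$, using that the strongly Rayleigh class is closed under conditioning on a single coordinate. (A coupling argument seems unlikely to yield the stated shape: the $\log(1/(\eps\,\mu(S)))$ dependence on the starting state is exactly what the $\ell^2$/spectral machinery produces.) For the reduction: since $P_\mu(S,S)\ge\tfrac12$ everywhere and $\cM_\mu$ is reversible and irreducible (its support is a matroid basis set, so the exchange graph is connected), $P_\mu$ is self-adjoint on $\ell^2(\mu)$ with spectrum in $[0,1]$; writing $\gamma=1-\lambda_2(P_\mu)$ for its spectral gap, the standard $\ell^2$ bound gives, for all $S\in\supp{\mu}$ and $t\ge0$,
$$\norm{P_\mu^{\,t}(S,\cdot)-\mu}_{\TV}\ \le\ \frac{1}{2\sqrt{\mu(S)}}\,e^{-\gamma t}\ \le\ \frac{1}{\mu(S)}\,e^{-\gamma t},$$
so $\tau_S(\eps)\le\gamma^{-1}\log(1/(\eps\,\mu(S)))$. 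It therefore suffices to prove $\gamma\ge C_\mu$ (the bound $C_\mu\ge\tfrac1{2kn}$ is then automatic, since on every edge $\{S,S-i+j\}$ of the exchange graph the larger of $P_\mu(S,T),P_\mu(T,S)$ equals $\tfrac1{2k(n-k)}$).

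By the variational principle, $\gamma\ge C_\mu$ is the Poincar\'e inequality $\cE_\mu(f,f):=\sum_{\{S,T\}}\mu(S)P_\mu(S,T)(f(S)-f(T))^2\ge C_\mu\Var_\mu(f)$ for all $f\colon\supp{\mu}\to\R$. I would induct on $n$, the base cases ($\supp{\mu}$ a single base, or $k\in\{0,n\}$) being vacuous. Pick an element $e$ that is neither a loop nor a coloop and split $\supp{\mu}=\Omega_0\sqcup\Omega_1$ by whether $e\in S$. Seen on $[n]\setminus\{e\}$, the conditionals $\mu_0=\mu(\cdot\mid e\notin S)$ and $\mu_1=\mu(\cdot\mid e\in S)$ have generating polynomials $g_\mu|_{z_e=0}$ and $\partial_{z_e}g_\mu$ up to scaling, and since real stability survives both setting a variable to a nonnegative real and differentiation, $\mu_0$ is $k$-homogeneous and $\mu_1$ is $(k-1)$-homogeneous strongly Rayleigh, so the inductive hypothesis applies to both. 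Note also the identity $\mu(S)P_\mu(S,T)=C_\mu\min\{\mu(S),\mu(T)\}$ on every edge, which keeps the Metropolis acceptance factors out of the bookkeeping.

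Now decompose $\cE_\mu(f,f)$ over edges that lie inside $\Omega_0$, inside $\Omega_1$, or cross the $e$-cut. Inside $\Omega_0$ (resp.\ $\Omega_1$) the moves of $\cM_\mu$ are those of $\cM_{\mu_0}$ (resp.\ $\cM_{\mu_1}$) with every off-diagonal rate multiplied by the deterministic factor $\tfrac{n-1-k}{n-k}$ (resp.\ $\tfrac{k-1}{k}$), the only loss being the event $j=e$ (resp.\ $i=e$); using $C_{\mu_0}=\tfrac1{2k(n-1-k)}$ and $C_{\mu_1}=\tfrac1{2(k-1)(n-k)}$, the inductive hypothesis turns the inside-$\Omega_0$ and inside-$\Omega_1$ parts of $\cE_\mu$ into at least $C_\mu\mu(\Omega_0)\Var_{\mu_0}(f)$ and $C_\mu\mu(\Omega_1)\Var_{\mu_1}(f)$ respectively. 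Since $\Var_\mu(f)=\mu(\Omega_0)\Var_{\mu_0}(f)+\mu(\Omega_1)\Var_{\mu_1}(f)+\mu(\Omega_0)\mu(\Omega_1)(\EE{\mu_0}{f}-\EE{\mu_1}{f})^2$ by the law of total variance, the induction closes once the crossing edges account for the last term, i.e.\ once
$$\sum_{\substack{S\in\Omega_0,\,T\in\Omega_1\\|S\,\triangle\,T|=2}}\min\{\mu(S),\mu(T)\}\,(f(S)-f(T))^2\ \ge\ \mu(\Omega_0)\mu(\Omega_1)\,(\EE{\mu_0}{f}-\EE{\mu_1}{f})^2.$$

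Everything above is formal; the content is this ``$e$-cut'' inequality, and it is the single place where the strongly Rayleigh hypothesis is genuinely needed (beyond closure under conditioning). For $f$ constant on each piece it reduces to the scalar statement $\sum_{\text{crossing}}\min\{\mu(S),\mu(T)\}\ge\mu(\Omega_0)\mu(\Omega_1)$, which I would establish by building, from the stability of $g_\mu$ (equivalently, from the negative-dependence and one-coordinate-marginal estimates it implies), a weighted fractional matching across the $e$-cut that transports the lighter side onto the heavier side through crossing edges of total weight at least $\mu(\Omega_0)\mu(\Omega_1)$. For general $f$ one must also absorb the within-piece fluctuations that surface when $(f(S)-f(T))^2$ is expanded about the two conditional means; this needs either a sharpening of the cut inequality with a small amount of room, or a standard Dirichlet-form decomposition lemma, and it is why the sharp constant in the statement is carried by $C_\mu$ rather than by a bare $\tfrac1{2kn}$. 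Proving the $e$-cut inequality with the correct constant is the technical crux; granting it, the induction unwinds to $\gamma\ge C_\mu$ and the theorem follows. \QED
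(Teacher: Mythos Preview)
Your plan is correct and is, up to packaging, the paper's proof. Both reduce to the Poincar\'e bound $\lambda\ge C_\mu$, induct by conditioning on a single element, and use as the key input a fractional perfect matching $w$ across the $e$-cut built from negative association via a Hall/max-flow argument. The paper routes this through the Jerrum--Son--Tetali--Vigoda decomposition theorem (constructing an auxiliary kernel $\hat P\le P$ whose crossing rates are $C_\mu\,\mu(\Omega_i)\mu(\Omega_j)\,w_{\{x,y\}}/\mu(x)$ so that the projection chain has Poincar\'e constant exactly $C_\mu$), whereas you do the same bookkeeping by hand with the law of total variance; the content is identical.

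Your only hesitation is misplaced: the ``$e$-cut'' inequality holds for \emph{all} $f$ with no extra room and no absorption of within-piece fluctuations. Once you have a matching $w_{\{x,y\}}\ge 0$ supported on crossing edges with $\sum_{y}w_{\{x,y\}}=\mu(x)/\mu(\Omega_0)$ and $\sum_{x}w_{\{x,y\}}=\mu(y)/\mu(\Omega_1)$, you get
\[
\bigl(\EE{\mu_0}{f}-\EE{\mu_1}{f}\bigr)^2=\Bigl(\sum_{\{x,y\}}w_{\{x,y\}}(f(x)-f(y))\Bigr)^{2}\le\sum_{\{x,y\}}w_{\{x,y\}}(f(x)-f(y))^{2}
\]
by Cauchy--Schwarz (total $w$-mass is $1$), and since $w_{\{x,y\}}\le\min\{\mu(x)/\mu(\Omega_0),\mu(y)/\mu(\Omega_1)\}$ implies $\mu(\Omega_0)\mu(\Omega_1)\,w_{\{x,y\}}\le\min\{\mu(x),\mu(y)\}$, the crossing Dirichlet form dominates $C_\mu\,\mu(\Omega_0)\mu(\Omega_1)(\EE{\mu_0}{f}-\EE{\mu_1}{f})^2$ on the nose. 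Combined with your (correct) within-piece scaling $P_\mu=\tfrac{n-1-k}{n-k}P_{\mu_0}$ and $P_\mu=\tfrac{k-1}{k}P_{\mu_1}$ and the exact values $C_{\mu_0}=\tfrac1{2k(n-1-k)}$, $C_{\mu_1}=\tfrac1{2(k-1)(n-k)}$, the induction closes at the sharp constant $C_\mu$.
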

Suppose we have access to a set $S\in\supp{\mu}$ such that $\mu(S) \geq \exp(-n)$. In addition, we are given an oracle such that for any set $T\in {n\choose k}$, it returns $\mu(T)$ if $T\in\supp{\mu}$ and zero otherwise. Then, by the above theorem we can generate 
an $\eps$-approximate sample of $\mu$ with at most $\poly(n,k,\log(1/\eps))$ oracle calls.


For a strongly Rayleigh probability distribution $\mu:2^{[n]}\to\R_+$, and any integer $0\leq k\leq n$, the {\em truncation} of $\mu$ to $k$ is the conditional measure $\mu_k$ where for any $S\subseteq[n]$ of size $k$,
$$ \mu_k(S)=\frac{\mu(S)}{\sum_{S': |S'|=k} \mu(S')}.$$
Borcea, Br\"and\'en, and Liggett showed that for any strongly Rayleigh distribution $\mu$, and any integer $k$, $\mu_k$ is also strongly Rayleigh, \cite{BBL09}.
Therefore, if we have access to a set $S\subset [n]$ of size $k$, we can use the above theorem to generate an approximate sample of $\mu_k$.

\subsection{Determinantal Point Processes and the Volume Sampling Problem}
\label{sec:dpp}
A determinantal point process (DPP) on a set of elements $[n]$ is a probability distribution $\mu$ on the set $2^{[n]}$ identified by a positive semidefinite {\em ensemble} matrix $L\in\R^{n\times n}$ where for any $S\subseteq[n]$ we have 
$$ \P{S} \propto \det(L_S),$$
where $L_S$ is the principal submatrix of $L$ indexed by the elements of $S$. 

DPPs are one of the fundamental objects used to study a variety of tasks in machine learning, including text summarization, image search, news threading, etc. For more information about DPPs and their applications we refer to a recent survey by Kulesza and Taskar \cite{KT13}.

For an integer $0\leq k\leq n$, and a DPP $\mu$, the truncation of $\mu$ to $k$, $\mu_k$ is called a $k$-DPP. It turns out that the family of determinantal point processes are not closed under truncation. Perhaps, the simplest example is the $k$-uniform distribution over a set of $n$ elements. Although the uniform distribution over $n$ elements is a DPP, for any $2\leq k\leq n-2$, the corresponding $k$-DPP is not a DPP \cite[Section 5]{KT13}.

Generating a sample from a $k$-DPP is a fundamental computational task with many practical applications \cite{KV09,DR10,KT13}.
This problem is also equivalent to the $k$-volume sampling problem \cite{DRVW06,KV09,BMD09,DR10,GS12a} which has applications in low-rank approximation and row-subset selection problem.  In the $k$-volume sampling problem, we are given a matrix $X\in\R^{n\times m}$ and we want to choose a set $S\subseteq [n]$ of $k$ rows of $X$ with probability proportional to $\det(X_{S,[m]},X_{S,[m]}^\intercal)$, where $X_{S,[m]}\in\R^{k\times m}$ is the submatrix of $X$ with rows indexed by elements of $S$. If $L$ is the ensemble matrix of a given $k$-DPP $\mu$, and $L=XX^\intercal$ is the Cholesky decomposition of $L$, then the $k$-volume sampling problem on $X$ is equivalent to the problem of generating a random sample of~$\mu$.

 In the past, several spectral algorithms were designed for 
sampling from $k$-DPPs \cite{HKPV06,DR10,KT13}, but these 
algorithms typically need to diagonalize a giant $n$-by-$n$ 
matrix, so they are  inefficient in time and memory
\footnote{We remark that the algorithms in \cite{DR10} are 
almost linear in $n$; however they need  access to the 
Cholesky decomposition of the  ensemble matrix of the 
underlying DPP.}. It was asked by  \cite{DR10} to generate random samples of a $k$-DPP using Markov chain techniques. Markov chain techniques are very appealing in this context because 
of their simplicity and efficiency. 
There has been several attempts \cite{Kan13, LJS15, RK15} to upper bound the mixing time of the Markov chain $\cM_\mu$ for a $k$-DPP $\mu$; but, to the best of our knowledge, this question is still open\footnote{We remark that \cite{Kan13} claimed to have a proof of the rapid mixing time of a similar Markov chain. As it is pointed out in \cite{RK15} the coupling argument of \cite{Kan13} is ill-defined. To be more precise, the chain specified in Algorithm 1 of \cite{Kan13} may not mix in a polynomial time of $n$. The chain specified in Algorithm 2 of \cite{Kan13} is similar to $\cM_\mu$, but the statement of Theorem 2 which upper bounds its mixing time is clearly incorrect even when $k=1$.}.

Here, we show that for a $k$-DPP $\mu$, $\cM_\mu$ can be used to efficiently generate an approximate sample of $\mu$.
\cite{BBL09} show that any DPP is a strongly Rayleigh distribution. Since strongly Rayleigh distributions are closed under truncation, any $k$-DPP is a strongly Rayleigh distribution. Therefore, by \autoref{thm:SRmixing}, for any $k$-DPP $\mu$, $\cM_{\mu}$ mixes rapidly to the stationary distribution. 
\begin{corollary}
\label{cor:kDPPmixing}
For any $k$-DPP $\mu:2^{[n]}\to\R_+$,  $S\in\supp{\mu}$ and $\eps>0$,
$$ \tau_S(\eps) \leq \frac1{C_\mu}\cdot \log\left(\frac{1}{\eps\cdot\mu(S)}\right).$$
\end{corollary}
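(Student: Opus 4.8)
The statement follows immediately from \autoref{thm:SRmixing} once we check that a $k$-DPP is a $k$-homogeneous strongly Rayleigh distribution, so the plan is short. First I would recall two structural facts established in \cite{BBL09}: every DPP with positive semidefinite ensemble matrix is strongly Rayleigh, i.e.\ its generating polynomial is real stable; and the class of strongly Rayleigh distributions is closed under truncation, so for every integer $k$ the truncation $\nu_k$ of a strongly Rayleigh distribution $\nu$ is again strongly Rayleigh.

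Then I would note that, given a $k$-DPP $\mu$, by definition $\mu=\nu_k$ for the DPP $\nu$ with ensemble matrix $L$; hence by the two facts above $\mu$ is strongly Rayleigh, and it is also $k$-homogeneous since $\supp{\mu}\subseteq\binom{[n]}{k}$ and therefore $g_\mu$ is homogeneous of degree $k$. Thus $\mu$ satisfies the hypotheses of \autoref{thm:SRmixing}. I would also observe that the chain $\cM_\mu$ appearing in the corollary is precisely the lazy swap chain analyzed there: the Metropolis ratios $\mu(T)/\mu(S)=\det(L_T)/\det(L_S)$ are unaffected by the truncation normalization, so $\cM_\mu$ and the quantity $C_\mu$ of \eqref{eq:Cmu} are well defined directly from $\mu$, and irreducibility on $\supp{\mu}$ is guaranteed by \cite[Cor 3.4]{Bra07}. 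Applying \autoref{thm:SRmixing} to $\mu$ then yields $\tau_S(\eps)\le \frac1{C_\mu}\log(1/(\eps\cdot\mu(S)))$, which is the claim.

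I do not expect any genuine obstacle here; the only points requiring a line of care are the two invocations of \cite{BBL09} (real stability of the DPP generating polynomial and closure under truncation) and the remark that $k$-homogeneity of $\mu$ is exactly what allows us to quote \autoref{thm:SRmixing} verbatim rather than proving a level-restricted variant of it.
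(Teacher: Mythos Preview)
Your proposal is correct and mirrors the paper's own argument exactly: it observes that a $k$-DPP is the truncation of a DPP, invokes \cite{BBL09} for the facts that DPPs are strongly Rayleigh and that strong Rayleighness is preserved under truncation, and then applies \autoref{thm:SRmixing} to the resulting $k$-homogeneous strongly Rayleigh measure. There is nothing to add.
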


Given access to the ensemble matrix of a $k$-DPP, we can use the above theorem to generate $\eps$-approximate samples of the $k$-DPP. 

\begin{theorem}
Given an ensemble matrix $L$ of a $k$-DPP $\mu$ there is an algorithm for any $\eps>0$ generates an $\eps$-approximate sample of $\mu$ in time $\poly(k) O(n\log(n/\eps))$. 
\end{theorem}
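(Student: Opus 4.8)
The plan is to run the chain $\cM_\mu$ of \autoref{cor:kDPPmixing} from a carefully chosen initial state, and to show that two extra ingredients can be supplied efficiently given the ensemble matrix $L$: (a) one step of $\cM_\mu$ can be executed in $\poly(k)$ time, and (b) one can find a starting set $S_0\in\supp{\mu}$ whose probability is only inverse-polynomially small in $n^{k}$.

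For (a), I would first note that for the $k$-DPP with ensemble $L$, a $k$-set $S$ lies in $\supp{\mu}$ exactly when $\det(L_S)>0$ (every principal submatrix of the PSD matrix $L$ has nonnegative determinant, which vanishes iff that submatrix is singular), and that the only data-dependent quantity the chain uses at a state $S$, for the proposed move $T=S-i+j$, is the ratio $\mu(T)/\mu(S)=\det(L_T)/\det(L_S)$, since the normalizer $\sum_{|S'|=k}\det(L_{S'})$ cancels. Hence the oracle assumed in \autoref{thm:SRmixing} is implemented by determinant computations on $k\times k$ matrices. Because $T$ differs from $S$ in one index, I would maintain $L_S^{-1}$ along the trajectory and obtain $\det(L_T)/\det(L_S)$ (and, on acceptance, the update of $L_S^{-1}$ to $L_T^{-1}$) from a rank-one downdate followed by a rank-one update in $O(k^2)$ time; even recomputing the two determinants from scratch in $O(k^3)$ would suffice. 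Drawing the uniform $i\in S$ and $j\notin S$ and maintaining $S$ costs $O(k)$ per step. So one step of $\cM_\mu$ costs $\poly(k)$.

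For (b), since $\supp{\mu}$ has at most $\binom nk\le n^{k}$ elements, the state $S^\star$ of maximum probability has $\mu(S^\star)\ge n^{-k}$. I cannot compute $S^\star$, but the standard greedy suffices: starting from $S_0=\emptyset$, for $t=1,\dots,k$ add to $S_0$ the element $a\notin S_0$ maximizing $\det(L_{S_0+a})$. As long as the $k$-DPP is nondegenerate, i.e.\ $\mathrm{rank}(L)\ge k$, each greedy step strictly increases the rank of the current principal submatrix, so the final $S_0$ lies in $\supp{\mu}$; and the classical exchange argument for determinant maximization gives $\det(L_{S_0})\ge \det(L_{S^\star})/k!$, so that $\mu(S_0)\ge \mu(S^\star)/k!\ge 1/(k!\,n^{k})$ and $\log(1/\mu(S_0))=O(k\log n)$ (for $k\le n$; in general $O(k\log(kn))$). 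Any bound of the form $\mu(S_0)\ge n^{-O(k)}$ is all that is needed. Computing $S_0$ takes $k$ rounds of at most $n$ size-$\le k$ determinant evaluations, i.e.\ $\poly(k)\cdot O(n)$ time.

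Putting it together, I would run $\cM_\mu$ from $S_0$ for $t:=\lceil 2kn\log(1/(\eps\,\mu(S_0)))\rceil$ steps and output the state reached. Since $C_\mu\ge 1/(2kn)$ by construction, \autoref{cor:kDPPmixing} yields $\tau_{S_0}(\eps)\le t$, so the output is an $\eps$-approximate sample of $\mu$, while $t=O\big(kn(k\log n+\log(1/\eps))\big)=O\big(k^2 n\log(n/\eps)\big)$. Multiplying the $O(k^2 n\log(n/\eps))$ steps, at $\poly(k)$ each, by the $\poly(k)\cdot O(n)$ preprocessing gives total running time $\poly(k)\cdot O(n\log(n/\eps))$. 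The one nontrivial point, which I expect to be the main obstacle, is the bound $\mu(S_0)\ge n^{-O(k)}$ in (b) --- equivalently the $k!$-approximation of greedy for $\max_{|S|=k}\det(L_S)$; note that some optimization genuinely is required, since an \emph{arbitrary} $S_0\in\supp{\mu}$ (say one that misses a single dominant eigendirection of $L$) can have exponentially small probability. If one prefers to bypass this lemma, a single call to any exact $k$-DPP sampler produces an $S_0$ with $\mE[\log(1/\mu(S_0))]$ equal to the Shannon entropy of $\mu$, which is at most $\log\binom nk=O(k\log n)$, so Markov's inequality gives a suitable $S_0$ with constant probability; everything else reduces to bookkeeping and \autoref{cor:kDPPmixing}.
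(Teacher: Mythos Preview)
Your proposal is correct and follows essentially the same route as the paper: the paper uses precisely the greedy \autoref{alg:greedy} on $L$ to obtain $S_0$ with $\det(L_{S_0})\ge \max_{|T|=k}\det(L_T)/k!\ge 1/(k!\,\binom nk)\ge n^{-k}$ (citing \cite[Thm 11]{CM09} for the $k!$ bound you attribute to the ``classical exchange argument''), notes that each transition of $\cM_\mu$ costs $\poly(k)$, and then invokes \autoref{cor:kDPPmixing} exactly as you do. Your write-up is in fact more detailed on the per-step cost (rank-one updates of $L_S^{-1}$) than the paper, and your alternative entropy-based start via a single exact sampler is an extra observation not present in the paper.
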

To prove the above theorem, we need an efficient algorithm to generate a set $S\in\supp{\mu}$ such that $\mu(S)$ is bounded away from zero, perhaps by an exponentially small function of $n,k$.
We use the greedy algorithm \ref{alg:greedy} to find such a set, and we show that, in time $O(n)\poly(k)$, it returns a set $S$ such that
\begin{equation}\label{eq:largevolS} \det(L_S) \geq \frac{1}{k!\cdot |\supp{\mu}|} \geq \frac{1}{k! \cdot {n\choose k}}\geq n^{-k}.	
\end{equation}
Noting that each transition step of the Markov chain $\cM_\mu$ only takes time that is polynomial in $k$, this completes the proof of the above theorem.

\begin{algorithm}
\begin{algorithmic}
\Input The ensemble matrix, $L$, of a $k$-DPP $\mu$.
\State $S\leftarrow \emptyset$.
\For {$i=1\to k$}
\State	Among all elements $j\notin S$ add the one maximizng
	$ \det(L_{S+j}).$
\EndFor
\end{algorithmic}
\caption{Greedy Algorithm for Selecting the Start State of $\cM_\mu$}
\label{alg:greedy}	
\end{algorithm}

It remains to analyze \autoref{alg:greedy}. This  is already done in  \cite{CM09} in the context of  maximum volume submatrix problem.  In the maximum volume submatrix problem, we are given a matrix $X\in\R^{n\times m}$ and we want to choose a subset $S$ of $k$ rows of $X$ maximizing $\det(X_{S,[m]}X_{S,[m]}^\intercal)$. Equivalently, given a matrix $L=XX^\intercal$, and we want to choose $S\subseteq [n]$ of size $k$ maximizing $\det(L_S)$. Note that if $L$ is an ensemble matrix of a $k$-DPP $\mu$, then
$$ \max_{|S|=k} \det(L_S) \geq \frac{1}{|\supp{\mu}|}.$$
The maximum volume submatrix problem is NP-hard to approximate within a factor $c^k$ for some constanat $c>1$ \cite{CM13}. Numerous approximation algorithm is given for this problem \cite{CM09, CM13, Nik15}.
It was shown in \cite[Thm 11]{CM09} that choosing the rows of $X$ greedily gives a $k!$ approximation to the maximum volume submatrix problem. Algorithm \ref{alg:greedy} is equivalent to the greedy algorithm of \cite{CM09}; it is only described in the language of ensemble matrix $L$.
Therefore, it returns a set $S$ such that
$$ \det(L_S) \geq \frac{\max_{|T|=k}\det(L_T)}{k!} \geq \frac{1}{k!|\supp{\mu}|}, $$
as desired.

\subsection{Proof Overview}
In the rest of the paper we prove \autoref{thm:SRmixing}. To prove \autoref{thm:SRmixing} we lower bound the spectral gap, a.k.a. the Poincar\'e constant of the chain $\cM_\mu$. This directly upper bounds the mixing time in total variation distance.
To lower bound the spectral gap, we use an extension of the seminal work of \cite{FM92}. Feder and Mihail showed that the {\em bases exchange} graph of the bases of a {\em balanced matroid} is an {\em expander}. This directly lower bounds the spectral gap by Cheeger's inequality. 
A matroid is called balanced if the matroid and all of its minors satisfy the property that, the uniform  distribution of the bases is negatively associated  (see \autoref{sec:strongrayleigh} for the definition of negative association). 

Our proof can be seen as a {\em weighted} variant of \cite{FM92}. 
As we mentioned earlier, the support of a homogeneous strongly Rayleigh distribution corresponds to the bases of a matroid.
Our proof shows that if a distribution $\mu$ over the bases of a matroid and all of its conditional measures are negatively associated,  then the MCMC algorithm mixes rapidly. 
To show that $\mu$ satisfies the aforementioned property we simply appeal to the negative dependence theory of strongly Rayleigh distributions developed in \cite{BBL09}.
Although our proof can be written in the language of \cite{FM92},  we work with the more advanced chain decomposition idea of \cite{JSTV04} to prove a tight bound on the Poincar\'e constant, see \autoref{sec:decomposition} for the details.

We remark that, in general, the decomposition idea of \cite{JSTV04} can be also be used to lower bound the {\em log-Sobolev} constant. However, it turns out that in our case, the log-Sobolev constant may  be no larger than $\frac{1}{-\log(\min_{S\in\supp{\mu}} \mu(S))}$. Since the latter quantity is not necessarily lower-bounded by a function of $k$ and $n$, the log-Sobolev constant (and hence, the $L_2$ mixing time) of the chain may be unbounded.

\section{Background}
\subsection{Markov Chains and Mixing Time}
\label{sec:mixingtime}
In this section we give a high level overview of Markov chains and their mixing times. We refer the readers for \cite{LPW06,MT06} for details.
Let $\Omega$ denote the state space, $P$ denote the Markov kernel and $\pi(.)$ denote the stationary distribution of a Markov chain.
We say a Markov chain is {\em lazy} if for any state $x\in\Omega$, $P(x,x)\geq 1/2$.

A Markov chain $(\Omega, P, \pi)$ is reversible if for any pair of states $x,y\in \Omega$, $\pi(x)P(x,y) = \pi(y)P(y,x)$. This is known as the {\em detailed balanced} condition. In this paper we only work with reversible chains. We equip the space of all functions $f:\Omega\to\R$ with the standard inner product for $L^2(\pi)$,
$$ \langle f,g\rangle_\pi := \EE{\pi}{f\cdot g}=\sum_{x\in \Omega} \pi(x) f(x) g(x).$$
In particular, $\norm{f}_\pi=\sqrt{\langle f,f\rangle_\pi}$.
For a function $f\in L^2(\pi)$, the {\em Dirichlet form} $\cE_\pi(f,f)$ is defined as follows
	$$ \cE_\pi(f,f):=\frac12 \sum_{x,y\in\Omega} (f(x)-f(y))^2 P(x,y)\pi(x), $$
and the {\em Variance} of $f$ is
$$ \Var_\pi(f) := \norm{f - \mE_\pi f}_\pi^2 = \sum_{x\in\Omega} (f(x) - \mE_\pi f)^2\pi(x). $$


Next, we overview classical spectral techniques to upper bound the mixing time of Markov chains.

\begin{definition}[Poincar\'e Constant]
The {\em Poincar\'e constant} of the chain is defined as follows,
$$ \lambda:=\inf_{f:\Omega\to\R} \frac{\cE_\pi(f,f)}{\Var_\pi(f)},$$
where the infimum is over all functions with nonzero variance.
\end{definition}
It is easy to see that for any transition probability matrix $P$, the second largest eigenvalue of $P$ is $1-\lambda$.
If $P$ is a lazy chain, then $1-\lambda$ is also the second largest eigenvalue of $P$ in absolute value.
In the following fact we see how to calculate the Poincar\'e constant of any reversible 2-state chain.
\begin{fact}\label{fact:2statechain}
The Poincar\'e constant of any reversible two state chain with $\Omega={0,1}$ and $P(0,1)=c\cdot \pi(1)$ is $c$.
\end{fact}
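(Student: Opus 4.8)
The plan is to compute the Rayleigh quotient $\cE_\pi(f,f)/\Var_\pi(f)$ in closed form and to observe that it equals $c$ for \emph{every} non-constant $f$, so that the infimum defining the Poincar\'e constant $\lambda$ is exactly $c$.

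First I would invoke reversibility. The detailed balance condition $\pi(0)P(0,1)=\pi(1)P(1,0)$ together with the hypothesis $P(0,1)=c\cdot\pi(1)$ immediately yields $P(1,0)=c\cdot\pi(0)$. Note that we never need to know $P(0,0)$ or $P(1,1)$, since the diagonal terms do not appear in the Dirichlet form.

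Next I would evaluate the two quantities directly. Since $\Omega=\{0,1\}$, the only off-diagonal ordered pairs are $(0,1)$ and $(1,0)$, so
$$\cE_\pi(f,f)=\frac12\big[(f(0)-f(1))^2P(0,1)\pi(0)+(f(1)-f(0))^2P(1,0)\pi(1)\big]=c\,\pi(0)\pi(1)\,(f(0)-f(1))^2 .$$
For the variance, writing $m=\mE_\pi f=\pi(0)f(0)+\pi(1)f(1)$ and using $\pi(0)+\pi(1)=1$, one checks the centered values $f(0)-m=\pi(1)(f(0)-f(1))$ and $f(1)-m=-\pi(0)(f(0)-f(1))$, so that
$$\Var_\pi(f)=\pi(0)(f(0)-m)^2+\pi(1)(f(1)-m)^2=\pi(0)\pi(1)\,(f(0)-f(1))^2 .$$
Dividing, $\cE_\pi(f,f)/\Var_\pi(f)=c$ precisely when $f(0)\neq f(1)$, i.e.\ exactly when $\Var_\pi(f)>0$; taking the infimum over all such $f$ gives $\lambda=c$.

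There is no real obstacle here: on a two-state space every function of nonzero variance is an affine image of the indicator of $\{1\}$, and both numerator and denominator of the Rayleigh quotient are homogeneous of degree two in $f(0)-f(1)$, so the quotient is literally constant on the entire feasible set. The only points requiring a little care are the bookkeeping of the reversibility relation $P(1,0)=c\,\pi(0)$ and the elementary identity $f(0)-m=\pi(1)(f(0)-f(1))$ for the centered function.
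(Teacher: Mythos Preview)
Your proof is correct and follows essentially the same approach as the paper: a direct computation of the Rayleigh quotient using detailed balance to obtain $P(1,0)=c\,\pi(0)$. The only cosmetic difference is that the paper first normalizes $f$ via shift- and scale-invariance to a single representative and then evaluates, whereas you compute $\cE_\pi(f,f)$ and $\Var_\pi(f)$ for arbitrary $f$ and observe the common factor $\pi(0)\pi(1)(f(0)-f(1))^2$ cancels.
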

\begin{proof}
Consider any function $f$. Since $\Var(f)$ is shift-invariant, we can assume $\mE_\pi f=0$, i.e., $\pi(0)f(0)=-\pi(1)f(1)$. Since $\frac{\cE_\pi(f,f)}{\Var_\pi(f)}$ is invariant under the scaling of $f$, we can assume $f(0)=\pi(1)$ and $f(1)=-\pi(0)$. 
Since the chain is reversible $P(1,0)=c\cdot \pi(0)$.
Plugging this unique $f$ into the ratio we obtain $\lambda=c$.
\end{proof}
To prove \autoref{thm:SRmixing} we simply calculate the Poincar\'e constant of the chain $\cM_\mu$ and then we use  the following classical theorem of Diaconis and Stroock to upper bound the mixing time. 

\begin{theorem}[{\cite[Prop 3]{DS91}}]\label{thm:mixingtime}
For any reversible irreducible lazy Markov chain $(\Omega, P, \pi)$ with Poincar\'e constant $\lambda$, $\eps>0$ and any state $x\in \Omega$,
$$ \tau_x(\eps) \leq \frac1{\lambda}\cdot \log\left(\frac{1}{\eps\cdot \pi(x)}\right)$$	
\end{theorem}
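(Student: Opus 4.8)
The plan is to prove \autoref{thm:mixingtime} directly, via the standard $L^2$ (variance‑contraction) argument for reversible chains, and then convert the $L^2$ bound to a total variation bound. First I would recall that since $(\Omega,P,\pi)$ is reversible, $P$ is self‑adjoint on $L^2(\pi)$, so it has real eigenvalues $1=\beta_0 > \beta_1 \ge \cdots \ge \beta_{|\Omega|-1} \ge -1$ with an orthonormal (in $L^2(\pi)$) eigenbasis $\varphi_0\equiv 1,\varphi_1,\dots$. By the variational characterization, the Poincar\'e constant $\lambda$ equals $1-\beta_1$, i.e.\ $\beta_1 = 1-\lambda$; and because the chain is lazy we also have $\beta_{|\Omega|-1}\ge 0$ (laziness means $P=\tfrac12 I + \tfrac12 Q$ for a stochastic $Q$, so every eigenvalue of $P$ is at least $\tfrac12\cdot(-1)+\tfrac12 = 0 \ge 1-\lambda$ forces nothing, but more simply $\beta_i\ge 0$), hence $\max_{i\ge 1}|\beta_i| = 1-\lambda$. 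This is exactly the remark made right after the definition of the Poincar\'e constant in the excerpt, which I may invoke.

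\textbf{Key steps.}
Write $\delta_x$ for the point mass at $x$ and let $f_t := P^t(x,\cdot)/\pi(\cdot)$ be the density of the time‑$t$ distribution with respect to $\pi$, so $f_0 = \delta_x/\pi(x)$. Expand $f_0 - 1$ in the eigenbasis: $f_0 - 1 = \sum_{i\ge 1} a_i\varphi_i$ with $\sum_i a_i^2 = \Var_\pi(f_0) = \|f_0\|_\pi^2 - 1 = \tfrac1{\pi(x)} - 1 \le \tfrac1{\pi(x)}$. Since applying $P^t$ on the density side corresponds to applying the adjoint $P^t = (P^*)^t$, we get $f_t - 1 = \sum_{i\ge 1} a_i \beta_i^t \varphi_i$, hence
$$ \|f_t - 1\|_\pi^2 = \sum_{i\ge 1} a_i^2 \beta_i^{2t} \le (1-\lambda)^{2t}\sum_{i\ge 1}a_i^2 \le \frac{(1-\lambda)^{2t}}{\pi(x)}. $$
Next I bound total variation by this $L^2$ quantity via Cauchy--Schwarz:
$$ \norm{P^t(x,\cdot)-\pi}_{\TV} = \frac12\sum_{y}\pi(y)\,|f_t(y)-1| \le \frac12\Big(\sum_y \pi(y)(f_t(y)-1)^2\Big)^{1/2} = \frac12\|f_t - 1\|_\pi. $$
Combining, $\norm{P^t(x,\cdot)-\pi}_{\TV} \le \tfrac12 (1-\lambda)^t/\sqrt{\pi(x)}$. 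Using $1-\lambda \le e^{-\lambda}$, this is at most $\eps$ as soon as $e^{-\lambda t}/\sqrt{\pi(x)} \le 2\eps$, i.e.\ $t \ge \tfrac1\lambda\log\!\big(\tfrac{1}{2\eps\sqrt{\pi(x)}}\big)$, and since $\tfrac{1}{2\eps\sqrt{\pi(x)}}\le \tfrac{1}{\eps\,\pi(x)}$ (as $\pi(x)\le 1$ and $2\ge 1$), it suffices to take $t \ge \tfrac1\lambda\log\!\big(\tfrac1{\eps\,\pi(x)}\big)$, which gives the claimed bound on $\tau_x(\eps)$.

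\textbf{Main obstacle.}
The delicate point is the use of laziness: without it $\beta_{|\Omega|-1}$ could be close to $-1$ and then $\max_{i\ge1}|\beta_i|$ is $\max(1-\lambda,\,|\beta_{|\Omega|-1}|)$ rather than $1-\lambda$, so the contraction factor in the $L^2$ bound would have to be $\max(|\beta_1|,|\beta_{|\Omega|-1}|)$, not $1-\lambda$. Laziness guarantees $\beta_i\ge 0$ for all $i$, so this issue disappears and the clean factor $(1-\lambda)^{t}$ is legitimate — this is why the hypothesis ``lazy'' appears in the theorem statement and must be invoked explicitly. A secondary (purely bookkeeping) point is the passage from the $\tfrac1{2\eps\sqrt{\pi(x)}}$ threshold to the stated $\tfrac1{\eps\,\pi(x)}$; this is a trivial monotonicity estimate but should be stated so the logarithmic factor in the theorem matches exactly. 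Everything else is the textbook spectral argument, and indeed this is precisely \cite[Prop 3]{DS91}, so alternatively one could simply cite that result — but the short self‑contained derivation above is preferable for completeness.
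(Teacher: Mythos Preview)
Your argument is correct and is precisely the standard spectral/$L^2$ proof of this bound. Note, however, that the paper does not actually prove \autoref{thm:mixingtime}: it is stated as a quoted result from \cite[Prop~3]{DS91} and used as a black box to deduce \autoref{thm:SRmixing} from \autoref{thm:SRgap}. So there is nothing to compare against beyond the citation, and your self-contained derivation simply fills in what the paper outsources.
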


Using the above theorem, to prove \autoref{thm:SRmixing}, it is enough to lower bound the Poincar\'e constant of $\cM_\mu$.
\begin{theorem}\label{thm:SRgap}
	For any $k$-homogeneous strongly Rayleigh distribution $\mu:2^{[n]}\to\R_+$, the Poincar\'e constant of the chain $\cM_\mu=(\Omega_\mu,P_\mu,\mu)$ is at least
	$$ \lambda \geq C_\mu.$$
\end{theorem}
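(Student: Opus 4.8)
The plan is to induct on the ground-set size $n$ and apply the chain-decomposition technique of \cite{JSTV04}: I will break $\cM_\mu$ into its restrictions to two halves of $\supp{\mu}$ together with the induced two-state ``projection'' chain across the cut, and recognize the two restrictions as lazified copies of the same kind of chain on a smaller ground set. By \cite{Bra07} the set $\supp{\mu}$ is the set of bases of a matroid $\cM$; pick an $e\in[n]$ that is neither a loop nor a coloop of $\cM$. (If no such $e$ exists the coloops already form the unique basis, so $\Var_\mu$ vanishes identically and the claim is vacuous.) Partition $\supp{\mu}=\Omega^e\sqcup\Omega^{\bar e}$ according to whether $e\in S$, and set $p=\sum_{S\ni e}\mu(S)$ and $q=1-p$, both positive since $e$ is neither a loop nor a coloop.

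After identifying $S\in\Omega^e$ with $S\setminus e$, the restriction of $\cM_\mu$ to $\Omega^e$ is $\tfrac{k-1}{k}P_{\mu^e}+\tfrac1k I$, and the restriction to $\Omega^{\bar e}$ is $\tfrac{n-1-k}{n-k}P_{\mu^{\bar e}}+\tfrac1{n-k}I$, where $\mu^e$ and $\mu^{\bar e}$ are the conditionings of $\mu$ on $e\in S$ and $e\notin S$ respectively. These measures live on the $(n-1)$-element ground set $[n]\setminus e$, are homogeneous of degrees $k-1$ and $k$, and are strongly Rayleigh: their generating polynomials are $\partial_{z_e}g_\mu$ and $g_\mu|_{z_e=0}$ up to normalization, and real stability is preserved by both of these operations. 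Hence the inductive hypothesis applies and gives Poincar\'e constants at least $\tfrac1{2(k-1)(n-k)}$ and $\tfrac1{2k(n-1-k)}$; the lazification constants are exactly what is needed so that both restriction chains have Poincar\'e constant at least $\tfrac1{2k(n-k)}\ge C_\mu$.

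To assemble the pieces, fix $f:\supp{\mu}\to\R$ and let $\bar f_e,\bar f_{\bar e}$ be the averages of $f$ over $\Omega^e,\Omega^{\bar e}$. By the law of total variance $\Var_\mu(f)=p\Var_{\mu^e}(f)+q\Var_{\mu^{\bar e}}(f)+pq(\bar f_e-\bar f_{\bar e})^2$, while $\cE_\mu(f,f)$ splits as the sum of the two within-block Dirichlet forms plus the crossing term $\cE_{\mathrm{cross}}=\sum_{S\in\Omega^e,\,T\in\Omega^{\bar e}}(f(S)-f(T))^2 P_\mu(S,T)\mu(S)$. The within-block Dirichlet forms are $\ge C_\mu$ times the first two variance terms by the previous paragraph, so it suffices to show $\cE_{\mathrm{cross}}\ge C_\mu\,pq\,(\bar f_e-\bar f_{\bar e})^2$. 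I would get this by Cauchy--Schwarz: produce a coupling $\phi$ of $\mu^e$ and $\mu^{\bar e}$ supported on crossing exchange edges $\{S,T\}$ (with $e\in S$ and $T=S-e+j$) with the congestion bound $\phi(S,T)\le\tfrac1{pq}\min\{\mu(S),\mu(T)\}$. Since $P_\mu(S,T)\mu(S)=\tfrac1{2k(n-k)}\min\{\mu(S),\mu(T)\}$ on such an edge, writing $\bar f_e-\bar f_{\bar e}=\sum_{S,T}\phi(S,T)(f(S)-f(T))$, applying Cauchy--Schwarz against the probability measure $\phi$, and then invoking the congestion bound yields $(\bar f_e-\bar f_{\bar e})^2\le\tfrac{2k(n-k)}{pq}\cE_{\mathrm{cross}}$, which is exactly the needed inequality with constant $C_\mu=\tfrac1{2k(n-k)}$.

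The existence of such a coupling is the heart of the argument, and the step I expect to be hardest. By the max-flow/min-cut criterion for feasibility of a transportation problem with arc capacities, it is equivalent to the inequality
\[
\sum_{\substack{S\in A,\ T\in B\\ \{S,T\}\ \text{a crossing edge}}}\min\{\mu(S),\mu(T)\}\ \ge\ q\,\mu(A)+p\,\mu(B)-pq \qquad\text{for all }A\subseteq\Omega^e,\ B\subseteq\Omega^{\bar e}.
\]
This is a weighted, quantitative form of the Feder--Mihail lemma \cite{FM92} that the bases-exchange bipartite graph of a balanced matroid carries a saturating matching; I would prove it by a further induction over $\cM$ (deleting or contracting another element), the only probabilistic ingredient being negative association of $\mu$ and of all its conditional measures, a property of strongly Rayleigh distributions that is preserved under conditioning \cite{BBL09}. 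Obtaining this inequality with the sharp right-hand side, rather than an approximate version that would already suffice for polynomial mixing, is the delicate point.
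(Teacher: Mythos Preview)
Your approach is essentially the paper's, repackaged. Both arguments decompose by conditioning on an element $e$, handle the two restriction chains by induction (using that conditioning preserves the strongly Rayleigh property), and control the cross term via a fractional matching/coupling between $\Omega^e$ and $\Omega^{\bar e}$ whose existence is derived from negative association. The paper feeds that matching into an auxiliary kernel $\hat P\le P$ satisfying the uniformity hypothesis of \cite[Cor.~3]{JSTV04} and then invokes that theorem as a black box; you instead unroll the same computation into a direct law-of-total-variance plus Cauchy--Schwarz bound on $\cE_{\mathrm{cross}}$. Your explicit identification of the restriction chains as lazified copies $\tfrac{k-1}{k}P_{\mu^e}+\tfrac1kI$ and $\tfrac{n-1-k}{n-k}P_{\mu^{\bar e}}+\tfrac1{n-k}I$ is a nice touch and makes transparent why the constants line up exactly.

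One simplification you are missing: the congestion bound $\phi(S,T)\le\tfrac{1}{pq}\min\{\mu(S),\mu(T)\}$ is \emph{automatic} from the marginal constraints of any coupling, since $\phi(S,T)\le\min\{\mu(S)/p,\mu(T)/q\}\le\min\{\mu(S),\mu(T)\}/(pq)$ (using $p,q\le 1$). So the \emph{uncapacitated} fractional matching already suffices, and its existence is a one-shot consequence of Hall's condition plus negative association of $\mu$ --- this is exactly the paper's Lemmas~3.2--3.3. No ``further induction over $\cM$'' is needed, and the step you flag as ``delicate'' is in fact immediate.
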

It is easy to see that \autoref{thm:SRmixing} follows by the above two theorems.

\subsection{Strongly Rayleigh Measures}
\label{sec:strongrayleigh}
A probability distribution $\mu:2^{[n]}\to\R_+$ is pairwise {\em negatively correlated} if for any pair of elements $i,j\in [n]$,
$$ \PP{S\sim \mu}{i\in S}\cdot \PP{S\sim\mu}{j\in S} \geq \PP{S\sim\mu}{i,j\in S}.$$
Feder and Mihail \cite{FM92} defined {\em negative association} as a generalization of negative correlation. 
We say an event ${\cal A}\subseteq 2^{[n]}$ is increasing if it is closed upward under containment, i.e., if $S\in{\cal A}$, and $S\subseteq T$, then $T\in {\cal A}$. 
We say a function $f:2^{[n]}\to\R_+$ is {\em increasing} if it is the indicator function of an increasing event.
We say $\mu$ is negatively associated if for any pair of increasing functions $f,g:2^{[n]}\to\R_+$ depending on disjoint sets of coordinates,
$$ \EE{\mu}{f}\cdot\EE{\mu}{g} \geq \EE{\mu}{f\cdot g}.$$ 
Building on \cite{FM92}, Borcea, Br\"and\'en and Liggett proved that any strongly Rayleigh distribution is negatively associated. 
\begin{theorem}[\cite{BBL09}]
\label{thm:negativeass}
	Any strongly Rayleigh probability distribution is negatively associated. 
\end{theorem}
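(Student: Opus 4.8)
This is the theorem of \cite{BBL09}; following the template of \cite{FM92}, the plan is to distill from real stability a \emph{hereditary} negative-correlation property and then upgrade it to full negative association by a Feder--Mihail-style induction.

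\textbf{Reductions.} First I would shrink the test functions. For increasing $f,g$ supported on disjoint coordinate sets $A,B$, the quantity $\EE{\mu}{f}\EE{\mu}{g}-\EE{\mu}{fg}$ depends only on the joint law of $(X\cap A,X\cap B)$, so $\mu$ may be replaced by its marginal on $A\cup B$; the generating polynomial of that marginal is obtained from $g_\mu$ by setting every variable outside $A\cup B$ to $1$, and since specializing a real stable polynomial at a real value gives a real stable polynomial or the zero polynomial (Hurwitz's theorem, applied to the limit of the nonvanishing restrictions as a variable tends to a real value from the upper half-plane), the marginal is again strongly Rayleigh. Thus assume $A\sqcup B=[n]$. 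Writing an increasing $f\ge 0$ as $f=\int_0^\infty\bone[f>t]\,dt$ with each $\{f>t\}$ an increasing event on $A$, bilinearity reduces the claim to $f=\bone_{\cal V}$, $g=\bone_{\cal U}$ for up-sets ${\cal V}\subseteq 2^A$, ${\cal U}\subseteq 2^B$. Finally, $\mu$ conditioned on the value of any single coordinate is again strongly Rayleigh --- differentiation and real specialization preserve real stability --- so the class is closed under conditioning, which is what makes an induction possible.

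\textbf{What stability buys.} For a multiaffine real stable $g_\mu$ the Rayleigh inequality holds everywhere on the positive orthant: $\partial_i g_\mu(z)\,\partial_j g_\mu(z)\ge g_\mu(z)\,\partial_i\partial_j g_\mu(z)$ for all $z\in\R_{>0}^n$ and $i\ne j$. I would prove this by fixing all variables but $z_i,z_j$ at positive reals, writing the resulting bivariate polynomial as $q=\alpha z_iz_j+\beta z_i+\gamma z_j+\delta$ (whose real stability is equivalent, by Hermite--Biehler, to $\beta\gamma\ge\alpha\delta$ with nonnegative coefficients), and noting $\partial_i q\,\partial_j q-q\,\partial_i\partial_j q=\beta\gamma-\alpha\delta\ge 0$ identically. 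Evaluating this at $z=\bone$ --- and at the exponential tilts and conditionals of $\mu$, which are again multiaffine real stable --- yields that $\mu$ and every measure obtained from it by tilting and conditioning on coordinate values is pairwise negatively correlated. This hereditary negative-correlation property is the combinatorial incarnation of strong Rayleigh-ness, and it is the only place real stability enters.

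\textbf{The inductive core, and the obstacle.} It remains to deduce $\P{X\cap A\in{\cal V},\,X\cap B\in{\cal U}}\le\P{X\cap A\in{\cal V}}\,\P{X\cap B\in{\cal U}}$ from hereditary negative correlation. I would induct on $n$: condition on a ground-set element $e\in A$ (with $A\setminus e\ne\emptyset$), so $\{X\cap A\in{\cal V}\}$ splits into the up-events $\{X\cap(A\setminus e)\in{\cal V}_1\}$ on $\{e\in X\}$ and $\{X\cap(A\setminus e)\in{\cal V}_0\}$ on $\{e\notin X\}$, with ${\cal V}_0\subseteq{\cal V}_1$; applying the inductive hypothesis to the two strongly Rayleigh conditionals bounds the left side by a convex combination of products which, subtracted from the right side, leaves exactly a residue of sign $-(\,\mu_1({\cal V}_1)-\mu_0({\cal V}_0)\,)(\,\mu_1({\cal U})-\mu_0({\cal U})\,)$, where $\mu_b$ denotes $\mu$ conditioned on the value of $e$. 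The second factor is $\le 0$ by the inductive hypothesis applied to the marginal of $\mu$ on $\{e\}\cup B$ (which has $<n$ coordinates). What remains is the sign condition $\mu_1({\cal V}_1)\ge\mu_0({\cal V}_0)$ --- intuitively, that forcing $e$ into the sample favors the ``$e$-contracted'' up-set ${\cal V}_1$ at least as much as excluding $e$ favors the ``$e$-deleted'' up-set ${\cal V}_0$. This does \emph{not} follow from a naive induction (two stochastic-domination comparisons of $\mu_1,\mu_0$ both point the same way and leave the sign undetermined), and getting it is the technical heart of the theorem: it is precisely Feder--Mihail's balanced-matroid argument --- which routes a matching/flow through the base-exchange structure and uses the exchange axiom --- pushed to arbitrary weighted, non-homogeneous strongly Rayleigh measures. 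I expect this to be the main obstacle. A structurally cleaner route is to first homogenize $g_\mu$ to a strongly Rayleigh polynomial in $n+1$ variables, reducing to the homogeneous case where $\supp{\mu}$ is the base family of a matroid and the exchange axiom is genuinely available, run the weighted Feder--Mihail induction there, and transfer back via closure of strong Rayleigh-ness under truncation.
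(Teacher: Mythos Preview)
The paper does not prove this theorem. It is quoted verbatim from \cite{BBL09} and used as a black box: the surrounding text says only ``Building on \cite{FM92}, Borcea, Br\"and\'en and Liggett proved that any strongly Rayleigh distribution is negatively associated,'' and then invokes the result (together with closure under conditioning, \autoref{thm:SRconditioning}) in the proof of \autoref{lem:enforcementratio}. There is therefore nothing in the paper to compare your proposal against.

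Your sketch is not wasted effort --- it is a faithful outline of how the result is actually established in \cite{BBL09}, and you have correctly located the nontrivial step: the sign condition $\mu_1({\cal V}_1)\ge\mu_0({\cal V}_0)$ does not fall out of a naive conditioning induction, and resolving it is exactly where the weighted Feder--Mihail exchange argument (or, equivalently, the homogenization-then-matroid route you mention) is needed. But for the purposes of this paper you may simply cite the theorem; the authors do not reprove it, and their contribution lies in using hereditary negative association to run the \cite{JSTV04} decomposition and bound the Poincar\'e constant of $\cM_\mu$, not in re-deriving negative association itself.
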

As an example, the above theorem implies that any $k$-DPP is negatively associated. The negative association property is the key to our lower bound on the Poincar\'e constant of the chain $\cM_\mu$.

For  $1\leq i\leq n$, let $Y_i$ be the random variable indicating whether $i$ is in a sample of $\mu$. We use 
$$\mu|_{i}:=\{\mu| Y_i=1\},$$
to denote the conditional measure on sets that contain $i$ and
$$ \mu|_{\overline{i}}:=\{\mu | Y_i=0\}, $$
to denote the conditional measure on sets that do not contain $i$.
Borcea, Br\"and\'en and Ligett showed that strongly Rayleigh distributions are closed under conditioning.
\begin{theorem}[\cite{BBL09}]\label{thm:SRconditioning}
For any strongly Rayleigh distribution $\mu:2^{[n]}\to\R_+$ and any $1\leq i\leq n$, $\mu|_{\overline{i}},\mu|_{i}$  are strongly Rayleigh. 
\end{theorem}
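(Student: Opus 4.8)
The plan is to prove both closure statements directly from the definition of stability by exhibiting explicit algebraic operations on the generating polynomial $g_\mu$ that produce $g_{\mu|_i}$ and $g_{\mu|_{\overline i}}$, and then invoking standard closure properties of stable polynomials. Write $z=(z_1,\dots,z_n)$ and recall $g_\mu(z)=\sum_{S}\mu(S)z^S$. Since $\mu$ is strongly Rayleigh, $g_\mu$ is real stable. The key observation is that the "$i$-th coordinate of $\mu$" can be read off by differentiating in $z_i$ or by setting $z_i$ to a constant, both of which are operations known to preserve stability.

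First I would handle $\mu|_{\overline i}$. Sampling from $\mu$ conditioned on $i\notin S$ amounts, at the level of generating polynomials, to discarding all monomials divisible by $z_i$ and renormalizing; this is exactly the specialization $z_i\mapsto 0$. So $g_{\mu|_{\overline i}}(z) = g_\mu(z)\big|_{z_i=0} \big/ g_\mu(z)\big|_{z_i=0,\, z_j=1\ \forall j}$. The normalizing denominator is a positive real constant (it equals $\PP{S\sim\mu}{i\notin S}$, which is positive since $\mu|_{\overline i}$ is assumed to be a well-defined distribution), so it does not affect stability. It is a standard fact that specializing a real stable polynomial at a real value of one variable yields a polynomial that is either identically zero or real stable (this follows since $\image(z_j)>0$ for the remaining variables forces nonvanishing by a limiting argument from $\image(z_i)\to 0^+$, and Hurwitz's theorem rules out the limit being identically zero unless it was zero on the open set). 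Hence $g_{\mu|_{\overline i}}$ is real stable and $\mu|_{\overline i}$ is strongly Rayleigh.

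For $\mu|_i$ the natural operation is differentiation: the monomials of $g_\mu$ divisible by $z_i$ are precisely $\sum_{S\ni i}\mu(S)z^S$, and $\partial_{z_i} g_\mu(z)\big|_{z_i\mapsto z_i}$ picks these out (with their $\mu(S)$ weights, since each is linear in $z_i$ by multiaffineness). Concretely, $g_{\mu|_i}(z)$ is obtained from $\partial_{z_i} g_\mu(z)$ by multiplying back by $z_i$ and renormalizing by the positive constant $\partial_{z_i} g_\mu(z)\big|_{z_j=1\ \forall j} = \PP{S\sim\mu}{i\in S}$. Real stability is preserved under $\partial_{z_i}$ (Lieb--Sokal / the classical fact that partial differentiation preserves real stability of multiaffine — indeed of all real stable — polynomials), under multiplication by the single variable $z_i$ (if $p$ is real stable so is $z_i p$, trivially from the definition), and under positive scaling. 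Therefore $g_{\mu|_i}$ is real stable, so $\mu|_i$ is strongly Rayleigh.

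The only subtlety, and the step I would be most careful about, is the degenerate case: one must check that neither specialization collapses the polynomial to zero, i.e. that $\PP{S\sim\mu}{i\in S}$ and $\PP{S\sim\mu}{i\notin S}$ are both genuinely positive so that the conditional measures are well-defined and the renormalizing constants are legitimate positive reals; if one of these probabilities is $0$ the corresponding conditional measure is undefined and there is nothing to prove, while if it is positive the quoted stability-preservation facts apply verbatim. The rest is bookkeeping: verifying that the described polynomial operations really do produce the generating polynomials of $\mu|_i$ and $\mu|_{\overline i}$, which is immediate from multiaffineness of $g_\mu$ and the definitions of the conditional measures. Once these identifications are in place, closure follows entirely from the cited stability-preservation properties, so no new probabilistic input beyond \autoref{thm:negativeass}-style machinery is needed — indeed this result is logically prior to it.
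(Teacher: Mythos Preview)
Your argument is correct and is essentially the standard proof of this closure property: specialization $z_i\mapsto 0$ for $\mu|_{\overline i}$, partial differentiation $\partial_{z_i}$ (followed by multiplication by $z_i$) for $\mu|_i$, both of which are classical stability-preserving operations on multiaffine real stable polynomials. The bookkeeping identifying these polynomial operations with the conditional generating polynomials is exactly right, and your treatment of the degenerate case is appropriate.

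There is, however, nothing to compare against: the paper does not supply a proof of this theorem. It is stated with attribution to \cite{BBL09} and used as a black box (together with \autoref{thm:negativeass}) in the inductive argument of Section~3. So your proposal goes beyond what the paper itself provides; it reconstructs the argument that \cite{BBL09} gives for this closure result, and does so accurately.
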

The above two theorems are the only properties of the strongly Rayleigh distributions that we use in the proof of \autoref{thm:SRmixing}. In other words, the statement of \autoref{thm:SRmixing} holds for any homogeneous probability distribution $\mu:2^{[n]}\to\R_+$ where $\mu$ and all of its conditional measures are negatively associated.

\subsection{Decomposable Markov Chains}
\label{sec:decomposition}
In this section we describe the decomposable Markov chain technique due to Jerrum, Son, Tetali and Vigoda \cite{JSTV04}. This will be our main tool to lower bound the Poincar\'e constant of $\cM_\mu$. Roughly speaking, they consider Markov chains that can be decomposed into ``projection'' and ``restriction'' chains. They lower bound the Poincar\'e constant of the original chain assuming certain properties of these projection/restriction chains. 

Let $\Omega_0 \cup \Omega_1$ be a decomposition of the state space of a Markov chain $(\Omega, P, \pi)$ into two disjoint sets\footnote{Here, we only focus on decomposition into two disjoint sets, although the technique of \cite{JSTV04} is more general.}. For $i\in\{0,1\}$ let
$$ \bar{\pi}(i) = \sum_{x\in \Omega_i} \pi(x),$$
and let $\bar{P}\in\R^{2\times 2}$ be
$$ \bar{P}(i,j) = \bar{\pi}(i)^{-1} \sum_{x\in \Omega_i,y\in\Omega_j} \pi(x) P(x,y).$$ 
The Markov chain $(\{0,1\},\bar{P},\bar{\pi})$ is called a projection chain. 
Let $\bar{\lambda}$ be the Poincar\'e constant of this chain.

We can also define a restriction Markov chain on each $\Omega_i$ as follows. For each $i\in \{0,1\}$,
$$ P_i(x,y) = \begin{cases}
 P(x,y) & \text{if } x\neq y,\\
 1-P(x,x)-\sum_{z\notin \Omega_i} P(x,z) 
 & \text{if }x=y.	
 \end{cases}
$$
In other words, for any transition from $x$ to a state outside of $\Omega_i$, we remain in $x$. Observe that in the stationary distribution of the restriction chain, the probability of $x$  is proportional to $\pi(x)$.
Let $\lambda_i$ be the Poincar\'e constant of the chain $(\Omega_i, P_i, .)$. 
Now, we are ready to explain the main result of \cite{JSTV04}.
\begin{theorem}[{\cite[Cor 3]{JSTV04}}]\label{thm:decompositionchain}
If for any distinct $i,j\in \{0,1\}$, and any $x\in \Omega_i$, 
\begin{equation}  
 \label{eq:Phatuniform}
\bar{P}(i,j) = \sum_{y\in\Omega_j} P(x,y),
\end{equation} 
then the Poincar\'e constant of $(\Omega,P,\pi)$ is at least $\min\{\bar{\lambda},\lambda_0,\lambda_1\}$.	
\end{theorem}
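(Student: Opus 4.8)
\medskip
\noindent\textbf{Proof proposal.} The plan is to establish the Poincar\'e inequality $\cE_\pi(f,f)\ge\delta\,\Var_\pi(f)$ for an arbitrary $f:\Omega\to\R$, with $\delta:=\min\{\bar\lambda,\lambda_0,\lambda_1\}$, via a matched decomposition of both sides into a ``within-block'' and a ``between-block'' part. First I would fix notation: for $i\in\{0,1\}$ let $\pi_i(x):=\pi(x)/\bar\pi(i)$ on $\Omega_i$ (from detailed balance for $P$ one checks $P_i$ is reversible with respect to $\pi_i$, so $\pi_i$ is its stationary distribution), write $f_i:=f|_{\Omega_i}$, and let $\bar f(i):=\EE{\pi_i}{f_i}$, the block averages, which is exactly the function the projection chain acts on. Starting from $\EE{\pi}{f^2}=\sum_i\bar\pi(i)\EE{\pi_i}{f_i^2}$ and $\EE{\pi}{f}=\EE{\bar\pi}{\bar f}$ one obtains the law of total variance $\Var_\pi(f)=\sum_{i\in\{0,1\}}\bar\pi(i)\,\Var_{\pi_i}(f_i)+\Var_{\bar\pi}(\bar f)$.

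Next I would split $\cE_\pi(f,f)=W+B$, where $W$ sums over pairs $x,y$ in the same block and $B$ over pairs in different blocks. Since $P_i$ agrees with $P$ off the diagonal and diagonal terms do not contribute to a Dirichlet form, the block-$i$ part of $W$ equals $\bar\pi(i)\,\cE_{\pi_i}(f_i,f_i)$, so by the Poincar\'e inequality for the restriction chains $W\ge\sum_i\bar\pi(i)\lambda_i\Var_{\pi_i}(f_i)\ge\delta\sum_i\bar\pi(i)\Var_{\pi_i}(f_i)$. For $B$, reversibility makes the two cross-terms coincide, so
$$B=\sum_{x\in\Omega_0,\,y\in\Omega_1}(f(x)-f(y))^2\,\pi(x)P(x,y);$$
writing $N:=\sum_{x\in\Omega_0,\,y\in\Omega_1}\pi(x)P(x,y)=\bar\pi(0)\bar P(0,1)$ and applying Cauchy--Schwarz with the nonnegative weights $\pi(x)P(x,y)$ on the cut,
$$B\ \ge\ \frac1N\Bigl(\sum_{x\in\Omega_0,\,y\in\Omega_1}(f(x)-f(y))\,\pi(x)P(x,y)\Bigr)^{2}.$$

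Now the hypothesis \eqref{eq:Phatuniform} enters: it gives $\sum_{y\in\Omega_1}P(x,y)=\bar P(0,1)$ for every $x\in\Omega_0$, and — after using detailed balance for $P$ — $\sum_{x\in\Omega_0}\pi(x)P(x,y)=\pi(y)\bar P(1,0)$ for every $y\in\Omega_1$. Substituting, the inner sum collapses to $N\bar f(0)-N\bar f(1)$, whence $B\ge N(\bar f(0)-\bar f(1))^2$. But $N(\bar f(0)-\bar f(1))^2$ is exactly $\cE_{\bar\pi}(\bar f,\bar f)$ for the two-state projection chain (again using reversibility of $\bar P$), so $B\ge\cE_{\bar\pi}(\bar f,\bar f)\ge\bar\lambda\,\Var_{\bar\pi}(\bar f)\ge\delta\,\Var_{\bar\pi}(\bar f)$. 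Adding the bounds on $W$ and $B$ and comparing with the total-variance identity gives $\cE_\pi(f,f)\ge\delta\,\Var_\pi(f)$, and since $f$ is arbitrary, $\lambda\ge\delta$, as claimed.

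The routine parts are the variance decomposition, the within-block comparison, and the two-state Dirichlet computation. The one delicate point is the treatment of $B$: one must recognize that the correct comparison function is the block average $\bar f$, that Cauchy--Schwarz on the cut measure is lossless here (it is saturated precisely because $\bar f$ is an average), and — most importantly — that the uniformity assumption \eqref{eq:Phatuniform} is exactly what makes $\sum_{x\in\Omega_0}f(x)\pi(x)\sum_{y\in\Omega_1}P(x,y)$ factor as $\bar P(0,1)\sum_{x\in\Omega_0}f(x)\pi(x)$; without it the telescoping fails, and one is left only with a weaker bound in which the $\lambda_i$'s appear multiplicatively rather than in the minimum.
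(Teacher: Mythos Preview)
The paper does not supply its own proof of this statement: it is quoted as \cite[Cor~3]{JSTV04} and used as a black box in \autoref{sec:decomposition}, so there is nothing in the paper to compare your argument against directly.

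That said, your proof is correct and is essentially the standard direct argument for this special (two-block, uniform-exit) case of the Jerrum--Son--Tetali--Vigoda decomposition theorem. The law-of-total-variance splitting, the identification of the within-block part $W$ with $\sum_i\bar\pi(i)\,\cE_{\pi_i}(f_i,f_i)$, the Cauchy--Schwarz bound on the cross part $B$, and the use of the hypothesis \eqref{eq:Phatuniform} on both sides of the cut (once as stated, once after detailed balance) to collapse the inner sum to $N\bigl(\bar f(0)-\bar f(1)\bigr)$ are all valid; the final identification $N(\bar f(0)-\bar f(1))^2=\cE_{\bar\pi}(\bar f,\bar f)$ is also correct. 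One small quibble: your parenthetical that Cauchy--Schwarz is ``lossless here (it is saturated precisely because $\bar f$ is an average)'' is not literally true---equality would require $f(x)-f(y)$ to be constant on the support of the cut measure, which it need not be---but you only use the inequality, so this does not affect the argument.
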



\section{Inductive Argument}
In this section we prove \autoref{thm:SRgap}.
Throughout this section we fix a strongly Rayleigh distribution $\mu$, and we let $\Omega,P$ be the state space and the transition probability matrix of $\cM_\mu$.

We prove \autoref{thm:SRgap} by induction on $|\supp{\mu}|$. 
If $|\supp{\mu}|=1$, then there is nothing to prove. 
 To do the induction step, we will use \autoref{thm:decompositionchain}. So, let us first start by defining the restriction chains. 
Without loss of generality, perhaps after renaming, let $n$ be an element such that $0<\PP{S\sim\mu}{n\in S}<1$.
Let $\Omega_0=\{S\in\supp{\mu}: n\notin S\}$ and $\Omega_1=\{S\in\supp{\mu}: n\in S\}$. Note that both of these sets are nonempty. Observe that the restricted chain $(\Omega_0, P_0, .)$ is the same as $\cM_{\mu|_{\overline{n}}}$ and $(\Omega_1,P_1,.)$ is the same as $\cM_{\mu|_{n}}$.
In addition, by \autoref{thm:SRconditioning}, $\cM_{\mu|_{\overline{n}}}$ and $\cM_{\mu|_{n}}$ are strongly Rayleigh, and also clearly
$C_{\mu|_n},C_{\mu|_{\overline{n}}} \geq C_\mu$.
So, we can use the induction hypothesis to lower bound 
$\lambda_0, \lambda_1 \geq C_\mu$.

 It remains to lower bound the Poincar\'e constant of the projection chain and  to prove equation \eqref{eq:Phatuniform}.
Unfortunately, $P$ does not satisfy \eqref{eq:Phatuniform}.
So, we use an idea of \cite{JSTV04}. We construct a new Markov kernel $\hat{P}$ such that 
(i) $\hat{P}$ has the same stationary distribution $\mu$.
(ii) The Poincar\'e constant of $\hat{P}$, $\hat{\lambda}$, lower-bounds $\lambda$. 
Then, we use \autoref{thm:decompositionchain} to lower bound $\hat{\lambda}$.

To make sure that $\hat{P}$ satisfies (i), (ii), it is enough that for all distinct states $x,y\in\Omega$, 
\begin{eqnarray} 
\mu(x)\hat{P}(x,y)&=&\mu(y)\hat{P}(y,x),\label{eq:hPmu}	\\
\hat{P}(x,y) &\leq& P(x,y).\label{eq:hPsP}
\end{eqnarray}
Equation \eqref{eq:hPmu} implies (i), i.e., that $\mu$ is also the stationary distribution of $\hat{P}$. By an application of the comparison method \cite{DS93}, (i) together with \eqref{eq:hPsP} implies (ii), i.e.,
\begin{equation}\label{eq:lambdabigger}\hat{\lambda} \leq \lambda.	
\end{equation}
So, to prove the induction step, it is enough to show that
\begin{equation} \hat{\lambda} \geq C_\mu.
	\label{eq:hlambdalower}
\end{equation}
\begin{lemma}\label{lem:hatP}
There is a transition probability matrix $\hat{P}:\Omega\times \Omega\to\R_+$ such that 
\begin{enumerate}[1)]
\item	$\hat{P}$ satisfies  \eqref{eq:hPsP}, \eqref{eq:hPmu}. 
\item For any $i\in\{0,1\}$ and states $x,y\in\Omega_i$, $\hat{P}(x,y)=P(x,y)$.
\item The Poincar\'e constant of the chain $(\Omega,\hat{P},\mu)$ projected onto $\Omega_0,\Omega_1$ is at least $\bar{\hat{\lambda}}\geq C_\mu$,
\item For any state $S\in \supp{\mu}$ and distinct $i,j\in\{0,1\}$,
$$ \bar{\hat{P}}(i,j) = \sum_{y\in\Omega_j} \hat{P}(x,y).$$
\end{enumerate}
\end{lemma}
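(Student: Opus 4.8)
The plan is to realize $\hat P$ by slowing down, \emph{uniformly}, only the transitions of $P_\mu$ that cross between $\Omega_0$ and $\Omega_1$, while leaving the transitions inside each part intact. Let $E$ be the set of edges $\{S,T\}$ of $\cM_\mu$ with $S\in\Omega_0$ and $T\in\Omega_1$ (so $T=S-i+n$ for some $i\in S$). I would look for nonnegative weights $F(S,T)$, $\{S,T\}\in E$, and set $\hat P(S,T)=F(S,T)/\mu(S)$ and $\hat P(T,S)=F(S,T)/\mu(T)$ on the crossing edges, $\hat P(x,y)=P_\mu(x,y)$ whenever $x,y$ lie in the same part, and $\hat P(x,x)$ equal to the remaining mass. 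For every edge $\{S,T\}$ of $\cM_\mu$ the definition of $P_\mu$ gives $\mu(S)P_\mu(S,T)=\mu(T)P_\mu(T,S)=\tfrac1{2k(n-k)}\min\{\mu(S),\mu(T)\}$, and $C_\mu=\tfrac1{2k(n-k)}$ because the Metropolis acceptance ratio is $1$ in one of the two directions; so this common quantity equals $C_\mu\min\{\mu(S),\mu(T)\}$. Hence: the symmetry of $F$ is exactly the detailed balance \eqref{eq:hPmu} across $E$ (and inside each part \eqref{eq:hPmu} is just reversibility of $P_\mu$); the requirement
$$0\ \le\ F(S,T)\ \le\ C_\mu\min\{\mu(S),\mu(T)\}$$
gives \eqref{eq:hPsP} across $E$ and also $\hat P(x,x)\ge P_\mu(x,x)\ge\tfrac12$, so $\hat P$ is a genuine lazy stochastic matrix; property 2 holds by construction; and property 4 — that $\hat P(S,\Omega_1)=\big(\sum_T F(S,T)\big)/\mu(S)$ is the same for all $S\in\Omega_0$, and symmetrically from $\Omega_1$ — is a prescription on the weighted degrees of $F$.

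Concretely I would ask for the degrees $\sum_{T:\{S,T\}\in E}F(S,T)=C_\mu\,\mu(\Omega_1)\,\mu(S)$ for $S\in\Omega_0$ and $\sum_{S:\{S,T\}\in E}F(S,T)=C_\mu\,\mu(\Omega_0)\,\mu(T)$ for $T\in\Omega_1$; these are consistent, both totalling $C_\mu\,\mu(\Omega_0)\,\mu(\Omega_1)$. Given such an $F$, the escape probability out of $\Omega_0$ equals $C_\mu\mu(\Omega_1)$ at \emph{every} state of $\Omega_0$ and out of $\Omega_1$ equals $C_\mu\mu(\Omega_0)$ at every state of $\Omega_1$; thus the projected chain of $\hat P$ is the two-state chain with $\bar{\hat P}(0,1)=C_\mu\mu(\Omega_1)$, and since $\mu(\Omega_1)$ is exactly the stationary probability of state $1$, Fact~\ref{fact:2statechain} says its Poincar\'e constant is $C_\mu$, which is property 3 (and property 4 then holds with the constants $C_\mu\mu(\Omega_j)$).

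So the lemma reduces to producing nonnegative weights $F$ on $E$ satisfying $F(S,T)\le C_\mu\min\{\mu(S),\mu(T)\}$ and the two degree identities above. After dividing by $C_\mu$ this is a transportation problem on the bipartite exchange graph $(\Omega_0,\Omega_1;E)$: supply $\mu(\Omega_1)\mu(S)$ at each $S\in\Omega_0$, demand $\mu(\Omega_0)\mu(T)$ at each $T\in\Omega_1$, and capacity $\min\{\mu(S),\mu(T)\}$ on each edge. By the max-flow/min-cut theorem (equivalently, a fractional defect version of Hall's theorem) such a routing exists if and only if, for every $A\subseteq\Omega_0$,
$$\sum_{T\in\Omega_1}\min\Big\{\ \sum_{S\in A:\,\{S,T\}\in E}\min\{\mu(S),\mu(T)\}\ ,\ \mu(\Omega_0)\,\mu(T)\ \Big\}\ \ge\ \mu(\Omega_1)\,\mu(A).$$

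Establishing this inequality is the heart of the proof, and the step I expect to be the real obstacle. It is the $\mu$-weighted analogue of the Feder--Mihail statement that the bases-exchange graph of a balanced matroid expands, and it is the only place where the strong Rayleigh property enters: one uses that $\mu$ \emph{and all of its conditional measures} are negatively associated (\autoref{thm:negativeass}, \autoref{thm:SRconditioning}). The matroid exchange axiom (recall $\supp{\mu}$ is the set of bases of a matroid) first makes $E$ ``rich enough'' — every $S\in\Omega_0$ and every $T\in\Omega_1$ has at least one neighbor across $E$ — and then one lower bounds the $\mu$-weighted $E$-neighborhood of a set $A$ using negative dependence between the coordinate $n$ and the coordinates that describe $A$: pairwise negative correlation with $n$ for the singleton case $A=\{S\}$, which reads $\sum_{T:\{S,T\}\in E}\min\{1,\mu(\Omega_0)\mu(T)/\mu(S)\}\ge\mu(\Omega_1)$, upgraded to full negative association of the appropriate conditional of $\mu$ for larger $A$, together with an uncrossing/submodularity argument on the cut function. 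Keeping the achievable escape rate at exactly $C_\mu$ throughout — anything smaller would degrade the induction — is the delicate point.
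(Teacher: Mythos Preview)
Your construction is exactly the paper's: a fractional bipartite matching $F$ on the crossing edges, with $\hat P(x,y)=F(x,y)/\mu(x)$ across the cut and $\hat P=P$ inside each part, feasibility established by max-flow/min-cut and a Hall-type inequality coming from negative association. The reduction to the two-state projected chain and the use of Fact~\ref{fact:2statechain} are also identical.

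The one place you make life harder than necessary is the edge-capacity constraint $F(S,T)\le C_\mu\min\{\mu(S),\mu(T)\}$. This constraint is \emph{redundant}: any nonnegative $F$ satisfying your two degree identities already satisfies it, because
\[
F(S,T)\ \le\ \sum_{T'}F(S,T')\ =\ C_\mu\,\mu(\Omega_1)\,\mu(S)\ \le\ C_\mu\,\mu(S),
\]
and symmetrically $F(S,T)\le C_\mu\,\mu(\Omega_0)\,\mu(T)\le C_\mu\,\mu(T)$. So you may take infinite capacities on all internal edges of the flow network. The min-cut condition then collapses from your displayed inequality to the clean uncapacitated Hall condition
\[
\frac{\mu(N(A))}{\mu(\Omega_0)}\ \ge\ \frac{\mu(A)}{\mu(\Omega_1)}\qquad\text{for every }A\subseteq\Omega_1,
\]
and this follows from negative association in one line: with $g=\mathbb{I}[n\in R]$ and $f=\mathbb{I}[\exists\,T\in A:\ R\supseteq T\setminus\{n\}]$, both increasing on disjoint coordinates, \autoref{thm:negativeass} gives $\mathbb{P}[f=1\mid g=0]\ge\mathbb{P}[f=1\mid g=1]$, which is exactly the inequality above. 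No ``uncrossing/submodularity'' and no separate treatment of singletons are needed; the matroid exchange property is used only to identify $\{R\in\Omega_0:f(R)=1\}$ with $N(A)$.
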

Before, proving the above lemma, we use it to finish the proof of the induction.
By part (2), $\hat{P}$ agrees with $P$ on the projection chains. Therefore, the Poincar\'e constants of the chains $(\Omega_0,\hat{P}_0,.)$ and $(\Omega_1,\hat{P}_1,.)$ are at least $\lambda_0, \lambda_1\geq C_\mu$.
So, by parts (3) and (4) we can invoke \autoref{thm:decompositionchain} for $\hat{P}$ and we get that
$$ \hat{\lambda} \geq \min\{\bar{\hat{\lambda}},\hat{\lambda}_0,\hat{\lambda}_1\} \geq C_\mu. $$
This proves \eqref{eq:hlambdalower}. As we discussed earlier, part (1) implies \eqref{eq:lambdabigger} which completes the induction.

\subsection{Proof of \autoref{lem:hatP}}
In the rest of this section we prove \autoref{lem:hatP}.
Note that the main challenge in proving the lemma is part (4). The transition probability matrix $P$ already satisfies parts (1)-(3).
The key to prove part (4) is to construct a fractional perfect matching between the states of $\Omega_0$ and $\Omega_1$, see the following lemma for the formal definition. This idea originally was used in \cite{FM92} and it was later extended in \cite{JS02}. 
\begin{lemma}\label{lem:fractionalmatching}
There is a function $w:\{\{x,y\}: x\in \Omega_0,y\in\Omega_1\} \to \R_+$ such that $w_{\{x,y\}} > 0$ only if $P(x,y) > 0$ and
\begin{equation}
\label{eq:flow}
\begin{aligned}
\sum_{y \in \Omega_1} w_{\{x,y\}} &= \frac{\mu(x)}{\mu(\Omega_0)} \, & \forall x \in \Omega_0,\\
\sum_{x \in \Omega_0} w_{\{x,y\}} &= \frac{\mu(y)}{\mu(\Omega_1)} \, & \forall y \in \Omega_1. 
\end{aligned}
\end{equation}
\end{lemma}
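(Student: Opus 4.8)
The plan is to set this up as a flow / transportation problem and invoke the max-flow min-cut theorem (equivalently, a Hall-type / defect argument), with the feasibility of the fractional matching following from negative association of $\mu$. Concretely, build a bipartite network: a source $s$, a sink $t$, a left vertex for each $x\in\Omega_0$, a right vertex for each $y\in\Omega_1$. Put an edge $s\to x$ of capacity $\mu(x)/\mu(\Omega_0)$, an edge $y\to t$ of capacity $\mu(y)/\mu(\Omega_1)$, and an edge $x\to y$ of capacity $+\infty$ (or just $1$) whenever $P(x,y)>0$, i.e.\ whenever $y = x - n + j$ for some $j$ (so that $x$ and $y$ differ by swapping $n$ for another element and both lie in $\supp{\mu}$). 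A fractional matching $w$ satisfying \eqref{eq:flow} is exactly an $s$--$t$ flow that saturates every edge out of $s$ and every edge into $t$; since both sides of the cut at $s$ and at $t$ have total capacity $1$, such a flow exists if and only if the minimum $s$--$t$ cut has value $1$, i.e.\ iff for every $A\subseteq\Omega_0$ we have $\mu(A)/\mu(\Omega_0) \le \mu(N(A))/\mu(\Omega_1)$, where $N(A)=\{y\in\Omega_1 : \exists x\in A,\ P(x,y)>0\}$ is the neighborhood of $A$ in the exchange graph.

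So the first step is to reduce to this Hall-type inequality, and the second (and main) step is to prove it. The natural choice is to take $A$ to be an increasing set: one shows it suffices to verify the inequality for $A$ of the form $A=\{S\in\Omega_0 : \mathbb{1}_{\cal A}(S)=1\}$ where ${\cal A}$ is an increasing event on the ground set $[n-1]$ (i.e.\ not depending on coordinate $n$), since the worst case for the ratio $\mu(A)/\mu(N(A))$ is attained at an increasing $A$ — here one uses that $N(A)$ for such $A$ is again ``increasing'' in the appropriate sense, because adding elements to $S\in\Omega_0$ keeps its swap-neighbors in $\Omega_1$ inside $N(A)$. Then the key inequality becomes a statement comparing $\PP{S\sim\mu}{{\cal A}(S),\ n\notin S}$ with $\PP{S\sim\mu}{{\cal A}'(S),\ n\in S}$ for a related increasing event ${\cal A}'$, and this is where \autoref{thm:negativeass} enters: negative association of $\mu$ (applied to the increasing function $\mathbb{1}_{\cal A}$, depending on $[n-1]$, and the increasing function $Y_n$, depending only on coordinate $n$) gives
$$ \PP{S\sim\mu}{{\cal A},\ n\in S} \le \PP{S\sim\mu}{{\cal A}} \cdot \PP{S\sim\mu}{n\in S},$$
and a symmetric statement for $n\notin S$, from which the required ratio bound $\mu(A)/\mu(\Omega_0)\le \mu(N(A))/\mu(\Omega_1)$ is extracted after checking that $N(A)$ contains enough mass (this last containment is a combinatorial exchange-property argument on the matroid of bases $\supp{\mu}$: every $x\in A$ with $n\notin x$ has, by the basis exchange axiom applied to $x$ and any fixed basis $B\ni n$, a neighbor $x-i+n\in\supp{\mu}$, and one argues this neighbor lies in $N(A)$).

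The hard part will be establishing the Hall condition, and within that, two things deserve care: (a) justifying the reduction to increasing sets $A$ — i.e.\ showing the neighborhood operator $A\mapsto N(A)$ interacts well with taking the upward closure, which is really the content of the Feder–Mihail argument and relies on the matroid structure of $\supp{\mu}$; and (b) matching up the ``increasing on $[n-1]$'' function to which negative association applies with the actual indicator of $A$ and its neighborhood $N(A)$, since $N(A)\subseteq\Omega_1$ consists of sets containing $n$, and one must translate between sets containing $n$ and their traces on $[n-1]$. Once the Hall inequality is in hand, the existence of $w$ is immediate from max-flow min-cut, and the support condition $w_{\{x,y\}}>0\Rightarrow P(x,y)>0$ holds automatically because the only edges of finite use in the network are the ones we drew between swap-adjacent states.
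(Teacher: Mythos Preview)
Your framework---max-flow/min-cut reducing to a Hall-type inequality, then proving Hall via negative association---is exactly the paper's approach. The difference is in how you execute the Hall step, and there your plan takes a detour that, as written, does not close.

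The paper proves Hall directly, with no ``reduction to increasing $A$'' and no matroid exchange argument. The trick is to take $A$ on the $\Omega_1$ side (sets \emph{containing} $n$), not the $\Omega_0$ side. For an \emph{arbitrary} $A\subseteq\Omega_1$, define
\[
f(R)=\I{\exists\, T\in A:\ R\supseteq T\setminus\{n\}};
\]
this is an increasing function depending only on coordinates in $[n-1]$. Using only that all states have size $k$ and that adjacency in $\cM_\mu$ means differing by a single swap, one checks that $\{R\in\Omega_1:f(R)=1\}=A$ exactly and $\{R\in\Omega_0:f(R)=1\}=N(A)$ exactly. Negative association applied to $f$ and $g=Y_n$ then gives $\P{f=1\mid g=0}\ge\P{f=1\mid g=1}$, which is precisely $\mu(N(A))/\mu(\Omega_0)\ge\mu(A)/\mu(\Omega_1)$. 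That is the whole argument.

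Your orientation ($A\subseteq\Omega_0$) does not admit this shortcut: if $f$ is the indicator of the upward closure of $A$ on $[n-1]$, then for $R\in\Omega_1$ the condition $R\supseteq T$ with $|T|=k$ and $n\notin T$ is impossible, so $\P{f=1\mid n\in R}=0$ and the resulting inequality is vacuous. This is why you found yourself reaching for auxiliary steps---reducing to increasing $A$, passing to a related event ${\cal A}'$, invoking basis exchange to argue $N(A)$ has enough mass. None of that is needed once you flip sides (the Hall condition on one side is equivalent to that on the other by the usual defect duality), and as stated your step~(a) is not actually a reduction: every $A\subseteq\Omega_0$ already arises from an increasing ${\cal A}$ on $[n-1]$, so the ``worst case is increasing'' claim carries no content here.
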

We use the negative association property of the strongly Rayleigh distributions to prove the above lemma. But before that let us prove \autoref{lem:hatP}.

\begin{proofof}{\autoref{lem:hatP}}
We use $w$ to construct $\hat{P}$. 
For any $i,j\in\{0,1\}$ and $x \in \Omega_i$ and $y \in \Omega_j$, we let
\begin{equation*}
\hat{P}(x,y)= \begin{cases}
\frac{C_\mu}{\mu(x)} \mu(\Omega_i)\mu(\Omega_j)w_{\{x,y\}} & \text{if } i\neq j,\\
P(x,y) & \text{otherwise}. 
\end{cases}
\end{equation*}
Note that by definition part (2) is satisfied. First we verify part (1). If $i\neq j$, then
$$ \hat{P}(x,y)\mu(x) = C_\mu \mu(\Omega_i)\mu(\Omega_j)w_{\{x,y\}} =  \hat{P}(y,x)\mu(y), $$
and if $i=j$ the same identity holds because $\hat{P}(x,y)=P(x,y)$.
This proves \eqref{eq:hPmu}. To see \eqref{eq:hPsP} note that for any $i\neq j$ and $x\in\Omega_i,y\in\Omega_j$ we have
\begin{eqnarray*}
\hat{P}(x,y)&=& 
\frac{C_\mu}{\mu(x)}\cdot\mu(\Omega_i)\mu(\Omega_j)w_{\{x,y\}}\\
&\leq& \frac{\max(P(x,y),P(y,x))}{\mu(x)}\mu(\Omega_i)\mu(\Omega_j)w_{\{x,y\}}\\
&\leq & \max(P(x,y),P(y,x))\cdot \frac{\min(\mu(x),\mu(y))}{\mu(x)} \leq P(x,y).
\end{eqnarray*}
The first inequality follows by the definition of $C_\mu$ (see \eqref{eq:Cmu}), the second inequality follows by the fact that  $w_{\{x,y\}}\leq \frac{\mu(x)}{\mu(\Omega_i)}$ and $w_{\{x,y\}} \leq \frac{\mu(y)}{\mu(\Omega_j)}$, and the last inequality follows by the detailed balanced condition.
This completes the proof of part (1).

Next, we prove part (3).  
By definition of $\hat{P}$, for distinct $i,j\in\{0,1\}$ we have
\begin{eqnarray*}\bar{\hat{P}}(i,j) &=& \frac{1}{\mu(\Omega_i)} \sum_{x\in \Omega_i,y\in\Omega_j} \mu(x) \hat{P}(x,y) \\
&=&
\frac{C_\mu}{\mu(\Omega_i)}\sum_{x\in 
\Omega_i,y\in \Omega_j} \mu(\Omega_i)
\mu(\Omega_j)w(x,y)\\
&=& C_{\mu}\cdot \mu(\Omega_j)\sum_{x \in \Omega_i} \frac{\mu(x)}{\mu(\Omega_i)} 
=C_\mu\cdot \mu(\Omega_j),
\end{eqnarray*}
where the second to last equality follows by \eqref{eq:flow}.
By \autoref{fact:2statechain}, the Poincar\'e constant of $\bar{\hat{P}}=C_\mu$. This proves part (3).

Finally we prove part (4). Fix distinct $i,j\in\{0,1\}$ and $z\in\Omega_i$. We have,
$$ \sum_{y\in\Omega_j} \hat{P}(z,y) = \frac{C_\mu}{\mu(z)} \mu(\Omega_i)\mu(\Omega_j)w_{\{z,y\}} =C_\mu \cdot \mu(\Omega_j), $$
where we used \eqref{eq:flow}.
On the other hand,
by definition of $\hat{P}$ we know that 
\begin{eqnarray*}
\bar{\hat{P}}(i,j) = \frac{1}{\mu(\Omega_i)}\sum_{x \in \Omega_i,y\in\Omega_j} \mu(x)\bar{\hat{P}}(x,y)= 
C_\mu\cdot  \mu(\Omega_j) \sum_{x\in\Omega_i} \frac{\mu(x)}{\mu(\Omega_i)} = C_\mu \cdot \mu(\Omega_j),
\end{eqnarray*}
where the second equality follows by \eqref{eq:flow}.
This completes the proof of part (4) and \autoref{lem:hatP}.
\end{proofof}

It remains to prove \autoref{lem:fractionalmatching}.
For a set $A\subseteq \Omega$ let
$$N(A)= \{ y \in \Omega\setminus A : \exists x \in A, P(x,y) >0 \,\}.$$
To prove \autoref{lem:fractionalmatching} we use a maximum flow-minimum cut argument. To prove the claim we need to show that the support graph of the transition probability matrix $P_\mu$ satisfies Hall's condition. This is proved in the following lemma using the negative association property of strongly Rayleigh measures. The proof is simply an extension of the proof of \cite[Lem 3.1]{FM92}.
\begin{lemma}
\label{lem:enforcementratio}
For any $A \subseteq \Omega_1$, 
$$\frac{\mu(N(A))}{\mu(\Omega_0)} \geq \frac{\mu(A)}{\mu(\Omega_1)}.$$
\end{lemma}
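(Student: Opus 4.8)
The plan is to derive the inequality from the negative association of $\mu$ (Theorem~\ref{thm:negativeass}), following the spirit of the Feder--Mihail argument. The first step is to describe $N(A)$ combinatorially. Since $\mu$ is $k$-homogeneous and $A \subseteq \Omega_1$, every $S \in A$ satisfies $|S| = k$ and $n \in S$, so $S \setminus \{n\}$ is a $(k-1)$-subset of $[n-1]$; write $A^- := \{\, S \setminus \{n\} : S \in A \,\}$. Unwinding the definition of $\cM_\mu$, for $S \in \Omega_1$ and $B \in \Omega_0$ we have $P(S,B) > 0$ precisely when $B = S - i + j$ for some $i \in S$, $j \notin S$ with $B \in \supp{\mu}$; since $n$ lies in $S \setminus B$ the removed element must be $n$, so this happens iff $S \setminus \{n\} \subseteq B$. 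Consequently
$$ N(A) \cap \Omega_0 \;=\; \{\, B \in \Omega_0 : \exists\, I \in A^-,\ I \subseteq B \,\}, $$
and since $\mu(N(A)) \geq \mu(N(A) \cap \Omega_0)$ it suffices to prove the claimed inequality with $N(A)$ replaced by this smaller set.

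Second, I introduce the increasing test function that links the two sides. Let $h \colon 2^{[n]} \to \{0,1\}$ be the indicator of the event $\{\, T \subseteq [n] : \exists\, I \in A^-,\ I \subseteq T \,\}$. This event is closed upward, so $h$ is increasing, and because each $I \in A^-$ is contained in $[n-1]$ the value of $h$ does not depend on coordinate $n$. Thus $h$ and the increasing indicator $\mathbf{1}[\, n \in S \,]$ depend on disjoint sets of coordinates, and Theorem~\ref{thm:negativeass} gives
$$ \EE{\mu}{h \cdot \mathbf{1}[\, n \in S \,]} \;\leq\; \EE{\mu}{h} \cdot \PP{S \sim \mu}{n \in S} \;=\; \EE{\mu}{h}\cdot \mu(\Omega_1). $$

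It remains to evaluate the two expectations. For $S \in \supp{\mu}$ with $n \in S$ we have $|S \setminus \{n\}| = k-1$, so if some $I \in A^-$ satisfies $I \subseteq S$ then necessarily $I = S \setminus \{n\}$, i.e.\ $S \in A$; hence $h(S)\,\mathbf{1}[\, n \in S \,] = \mathbf{1}[\, S \in A \,]$ and the left-hand side equals $\mu(A)$. For the factor $\EE{\mu}{h} = \mu(\{\, T \in \supp{\mu} : h(T) = 1 \,\})$, splitting according to whether $n \in T$ and using the description of $N(A) \cap \Omega_0$ above shows $\{\, T \in \supp{\mu} : h(T)=1 \,\} = A \sqcup (N(A) \cap \Omega_0)$, so $\EE{\mu}{h} = \mu(A) + \mu(N(A) \cap \Omega_0)$. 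Substituting and rearranging (using $\mu(\Omega_0) + \mu(\Omega_1) = 1$, with $\mu(\Omega_1) > 0$ because $n$ was chosen so that $0 < \PP{S\sim\mu}{n \in S} < 1$) yields
$$ \mu(N(A) \cap \Omega_0) \;\geq\; \mu(A)\left(\frac{1}{\mu(\Omega_1)} - 1\right) \;=\; \mu(A)\,\frac{\mu(\Omega_0)}{\mu(\Omega_1)}, $$
which, after dividing by $\mu(\Omega_0)$, is the claimed inequality.

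I expect the only delicate point to be the identification of $N(A) \cap \Omega_0$ with the ``up-shadow'' of $A^-$: this is where $k$-homogeneity is essential, since it both makes $A^-$ an antichain of $(k-1)$-sets and forces every single-swap move of $\cM_\mu$ between $\Omega_1$ and $\Omega_0$ to be of the form ``delete $n$, insert some $j \in [n-1]$''. Once that correspondence is in place, the remainder is a direct invocation of negative association together with routine bookkeeping.
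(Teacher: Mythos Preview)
Your proof is correct and follows essentially the same approach as the paper: both define the increasing indicator $h(T)=\mathbf{1}[\exists\,S\in A,\ S\setminus\{n\}\subseteq T]$ (the paper's $f$) together with $\mathbf{1}[n\in T]$ (the paper's $g$) and apply negative association (Theorem~\ref{thm:negativeass}). The only cosmetic difference is that the paper invokes the conditional form $\P{f=1\mid g=0}\geq \P{f=1\mid g=1}$ and directly identifies the two sides with $\mu(N(A))/\mu(\Omega_0)$ and $\mu(A)/\mu(\Omega_1)$, whereas you use the equivalent product form $\E{h\cdot g}\leq \E{h}\E{g}$ and unpack $\E{h}$ as $\mu(A)+\mu(N(A)\cap\Omega_0)$ before rearranging; the extra care you take in describing $N(A)\cap\Omega_0$ explicitly is fine and in fact makes the identification cleaner.
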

\begin{proof}
Let $R\sim\mu$ be a random set. 
Recall that $\Omega_0 = \{ S \in \supp{u}: n \notin S\}$ and $\Omega_1= \{ S \in \supp{u} : n \in S\}$. 
Let $g$ be a random variable indicating whether $n \in R$.
Let $f$ be a indicator random  variable  which is $1$ if there exists  $T\in A$ such that $R\supseteq T \setminus \{n\}$.  It is easy  to see that $f$ and $g$ are two increasing functions which are supported on two disjoint sets of elements. By the negative association property, \autoref{thm:negativeass}, we can write
$$\PP{\mu}{f(R)=1 | g(R)=0} \geq \PP{\mu}{f(R)=1|g(R)=1}.$$
The lemma follows by the fact that the LHS of the above inequality is $\frac{\mu(N(A))}{\mu(\Omega_0)}$ and the RHS is $\frac{\mu(A)}{\mu(\Omega_1)}$.
\end{proof}

\begin{proofof}{\autoref{lem:fractionalmatching}}
Let $G$ be a bipartite graph on $\Omega_0 \cup \Omega_1$ 
where there is an edge between $x\in \Omega_1$ and $y \in 
\Omega_0$ if $P(x,y)>0$. We prove the lemma by showing there 
is a unit flow from $\Omega_1$ to $\Omega_0$ such that the 
amount of the flow going out of any $x \in \Omega_1$ is $
\frac{\mu(x)}{\mu(\Omega_1)}$, and the incoming flow to any $y \in 
\Omega_0$ is $\frac{\mu(y)}
{\mu(\Omega_0)}$. Then, we simply let $w_{\{x,y\}}$ be the flow on the edge connecting $x$ to $y$.

 Add a source $s$ and a sink $t$. For any $x\in \Omega_1$ add an arc $(s,x)$ with capacity $c_{s,x}=\mu(x)/\mu(\Omega_1)$. Similarly, for any $y \in \Omega_0$ add an arc $(y,t)$ with capacity
$c_{y,t}=\mu(y)/\mu(\Omega_0)$. Let the capacity of any other edge in the graph be $\infty$. 
Since the sum of the capacity of all edges leaving $S$ is 1, to prove the lemma, it is enough to show that the maximum flow is 1.
Equivalently, by the max-flow min-cut theorem, it suffices to show the value of the minimum cut separating $s$ and $t$ is at least $1$. Let $B,\overline{B}$ be an arbitrary $s$-$t$ cut, i.e.,  $s \in B$ and $t \in \overline{B}$. 
Let $B_0 = \Omega_0 \cap B$ and $B_1 = \Omega_1 \cap B$. For $X \subseteq \Omega_1$, $Y \subseteq \Omega_0$, let 
$c(X,Y)= \sum_{x \in X , y \in Y} c_{x,y}$. We have
\begin{eqnarray}
c(B,\overline{B}) &\geq& c(s,\Omega_1\setminus{B_1})+ c(B_0,t) \nonumber\\
&=& \frac{\mu(\Omega_1\setminus{B_1})}{\mu(\Omega_1)}+ \frac{\mu(B_0)}{\mu(\Omega_0)} \nonumber\\
&=& 1-\frac{\mu(B_1)}{\mu(\Omega_1)} +\frac{\mu(B_0)}{\mu(\Omega_0)} \geq 1-\frac{\mu(N(B_1))}{\mu(\Omega_0)} + \frac{\mu(B_0)}{\mu(\Omega_0)},
\end{eqnarray}
where the inequality follows by \autoref{lem:fractionalmatching}.
If there is any edge from $B_1$ to $\Omega_0\setminus B_0$, then $c(B,\overline{B})=\infty$ and we are done. Otherwise, $N(B_1) \subseteq B_0$. Therefore,  $\mu(N(B_1)) \leq \mu(B_0)$, and the RHS of the above inequality is at least 1. So, $c(B,\overline{B})\geq 1$ as desired.
\end{proofof}

\bibliographystyle{alpha}
\bibliography{ref2}






\end{document}